\documentclass{article}

\usepackage[main, final]{neurips_2026}

\usepackage{amsmath}
\usepackage{amssymb}
\usepackage{amsthm}
\usepackage{aliascnt}
\usepackage{mathtools}
\usepackage{nicefrac}
\usepackage{bm}

\usepackage{algorithm}
\usepackage{algpseudocode}

\usepackage{graphicx}
\usepackage{booktabs}
\usepackage{subcaption}

\usepackage{hyperref}
\usepackage{cleveref}

\usepackage{xcolor}
\usepackage{enumitem}
\usepackage{pifont}

\newcommand{\softmax}{\operatorname{softmax}}    
\newcommand{\onehot}{\operatorname{onehot}}      
\newcommand{\gumbel}{\operatorname{Gumbel}}      
\newcommand{\R}{\mathbb{R}}                      
\newcommand{\E}{\mathbb{E}}                      
\newcommand{\Var}{\operatorname{Var}}             
\newcommand{\Cov}{\operatorname{Cov}}             
\newcommand{\Tr}{\operatorname{Tr}}               
\newcommand{\norm}[1]{\left\| #1 \right\|}       

\newtheorem{theorem}{Theorem}
\newaliascnt{proposition}{theorem}
\newtheorem{proposition}[proposition]{Proposition}
\aliascntresetthe{proposition}
\newaliascnt{lemma}{theorem}

\aliascntresetthe{lemma}
\newaliascnt{corollary}{theorem}

\aliascntresetthe{corollary}
\newaliascnt{definition}{theorem}

\aliascntresetthe{definition}
\newaliascnt{remark}{theorem}
\newtheorem{remark}[remark]{Remark}
\aliascntresetthe{remark}

\crefname{proposition}{Proposition}{Propositions}
\crefname{lemma}{Lemma}{Lemmas}
\crefname{corollary}{Corollary}{Corollaries}
\crefname{definition}{Definition}{Definitions}
\crefname{remark}{Remark}{Remarks}

\title{On Reparameterizing the Score Function}

\author{
  Mohammad Hossein Amani$^{1}$ \quad
  Leonardo Bocchieri$^{1}$ \quad
  Maxime Peyrard$^{2}$ \\
  \textbf{Cl\'{e}ment Hongler}$^{1,3}$ \quad
  \textbf{Robert West}$^{1}$ \\[0.6em]
  $^{1}$EPFL \quad
  $^{2}$CNRS, Univ. Grenoble Alpes \quad
  $^{3}$XENT Labs
}

\begin{document}

\maketitle

\begin{abstract}
Policy gradients for autoregressive sequence models differentiate each token's log-probability while holding the conditioning history fixed. 
This discards information about how earlier decisions shape later conditional distributions, creating a credit-assignment bottleneck when rewards are sparse. 
We introduce History-Reparameterized (HR) policy gradients, which replace the frozen prefix in the score function with a differentiable reparameterized or relaxed history, allowing gradients of downstream log-probabilities to propagate backward through the generated trajectory. The resulting history-pathwise term is generally biased when added directly, so we derive Optimal HR (OHR), a centered control-variate estimator with a closed-form coefficient that minimizes the trace of the gradient variance at the population level. 
The variance reduction is governed by temporal structure in the policy dynamics: random or weakly structured policies provide little benefit, while policies with learned sequential dependencies yield informative history-pathwise corrections. For discrete tokens, the choice of relaxation is crucial: embedding-level relaxation yields substantial variance reduction that grows with horizon, whereas straight-through Gumbel-Softmax yields essentially none in our diagnostics. We validate the method on a Gaussian linear Markov model, a sparse-reward T-maze with a recurrent policy, and an autoregressive Transformer model for small Traveling Salesman Problem (TSP) instances.
\end{abstract}

\section{Introduction}
\label{sec:introduction}

Policy gradient methods for autoregressive sequence models---REINFORCE~\citep{williams1992},
PPO~\citep{schulman2017ppo}, or their descendants---are all built on the same primitive:
the score function $\nabla_\theta \log p_\theta(x_t \mid x_{<t})$.
Regardless of the algorithm built around it, this object treats the conditioning history $x_{<t}$ as a fixed constant,
differentiating only the \emph{direct} dependence on $\theta$ at step $t$ and discarding how $\theta$ shaped the
history that produced $x_{<t}$.
When a single scalar reward is assigned to an entire sequence, the resulting estimator is \textbf{order-agnostic}:
the causal dependence of $x_t$ on $x_{<t}$---the entire content of the autoregressive factorization---is invisible to the gradient,
and the variance of credit assignment to early tokens grows with horizon length~\citep{williams1992,schulman2015}.

To illustrate, consider a model solving a math problem step by step, with a binary reward for the correct final answer. Suppose the model makes an arithmetic error at an early step, then reasons correctly from that wrong intermediate result for the remainder. The score-function estimator assigns the same negative reward to every step. Yet the model's own conditional probabilities contain a diagnosis: the log-probabilities of the later steps, conditioned on the erroneous intermediate, are already high---the later reasoning is sound given its premise. If we could ask ``how would the downstream log-probabilities have changed had the early step been slightly different?'', we would obtain a gradient signal that is sensitive to the downstream consequences of the early step, rather than assigning the same scalar reward uniformly across all score terms.

In standard RL, this frozen-history treatment is not a design choice but a necessity.
The state transition is governed by an external environment whose dynamics are unknown and non-differentiable
with respect to the policy parameters---there is no gradient path from the policy to the states it visits.
Autoregressive sequence generation without an external environment is fundamentally different.
Each token's distribution is a function of the model's own previous outputs, and the conditioning history is itself
a differentiable function of $\theta$.
In this environment-less setting, treating the history as frozen is a convention inherited from the MDP framework,
not a structural constraint.

This raises a fundamental question: \emph{does the policy's own temporal structure contain information that can
improve credit assignment?}
If so, a policy with learned autoregressive structure is not merely a warm initialization for optimization---it is itself a
source of credit-assignment signal, with richer sequential dependencies yielding stronger signal.

We answer affirmatively by introducing \emph{History-Reparameterized} (HR) policy gradients, which replace the frozen history in the score function with a differentiable reparameterized or relaxed history. This exposes an additional history-pathwise term that propagates gradients of downstream log-probabilities backward through the generated prefix, crucially without requiring a differentiable reward. Because the reparameterization modifies only the conditioning argument of the log-probability, it applies to any algorithm built on the score function. The raw HR estimator is generally biased, so we derive Optimal HR (OHR), an unbiased control-variate estimator with a closed-form coefficient that minimizes the trace of the estimator variance (\Cref{thm:ohr_variance}). Because HR reparameterizes the \emph{conditioning history} rather than the reward, it exploits sequential structure that reward-reparameterization methods such as REBAR~\citep{tucker2017} and RELAX~\citep{grathwohl2018} cannot access; the two ideas are orthogonal and composable.

Experimentally, embedding-level history reparameterization yields variance reduction that grows with horizon (reaching approximately $5\times$ in the minimal embedding model), while straight-through Gumbel-Softmax yields negligible reduction, highlighting that the choice of relaxation is decisive. This effect appears in a minimal autoregressive model (\Cref{sec:exp_embedding}), a sparse-reward T-maze with a GRU policy (\Cref{sec:exp_tmaze}), and a small Transformer-style model for the TSP (\Cref{sec:exp_tsp}). The effect of variance reduction increases when HR is combined with reward-side methods such as RLOO or learned value functions \Cref{app:supp_composability}.

\section{Background}
\label{sec:background}

We study autoregressive sequence generation in the environment-less RL setting: the policy generates the entire trajectory, with no external dynamics.

\subsection{Autoregressive Policy and Objective}
\label{sec:bg_policy}

An autoregressive policy $p_\theta$ generates a sequence $x_{0:N}$ conditioned on a prompt $x_{<0}$:
\begin{equation}
    p_\theta(x_{0:N} \mid x_{<0}) = \prod_{t=0}^{N} p_\theta(x_t \mid x_{<t}),
    \label{eq:autoregressive}
\end{equation}
where each $x_t$ is either continuous ($x_t \in \R^d$) or discrete ($x_t \in \{e_1, \ldots, e_V\}$, the one-hot basis vectors over a vocabulary of size $V$). We write $\phi_\theta(x_{<t})$ for the output of the neural network (e.g., a Transformer) at step $t$. These are logits in the discrete case, or sufficient statistics (for instance, mean and variance for Gaussian random variables) in the continuous case so that $p_\theta(x_t \mid x_{<t}) = \mathcal{P}(\phi_\theta(x_{<t}))$ for some parametric family $\mathcal{P}$.

A reward function $r(x_{<0},\, x_{0:N}) \in \R$ scores the complete rollout given the prompt. The objective is
\begin{equation}
    J(\theta) = \E_{x_{0:N} \sim p_\theta(\cdot \mid x_{<0})}[r(x_{<0},\, x_{0:N})].
    \label{eq:objective}
\end{equation}
We do not assume $r$ is differentiable.

\subsection{Score-Function Estimator}
\label{sec:bg_score}

The score-function (REINFORCE) estimator~\citep{williams1992} gives an unbiased gradient:
\begin{equation}
    \nabla_\theta J = \E\Bigl[(r - b)\sum_{t=0}^{N} \nabla_\theta \log p_\theta(x_t \mid x_{<t})\Bigr],
    \label{eq:reinforce}
\end{equation}
where $b$ is any baseline independent of the sampled actions~\citep{williams1992,ranganath2014bbvi}. Each score term treats the conditioning history $x_{<t}$ as a fixed constant. When a single reward scores the entire sequence, every score term receives the same weight $(r - b)$, and the causal influence of early tokens on later log-probabilities is invisible to the gradient.

\subsection{Reparameterized Sampling}
\label{sec:bg_reparam}

For a large class of distributions, sampling can be written as a deterministic, differentiable transformation of parameter-free noise~\citep{kingma2014,rezende2014stochastic}. If there exists a noise distribution $p(\varepsilon)$ independent of $\theta$ and a differentiable function $S$ such that
\begin{equation}
    x_t = S(\phi_\theta(x_{<t}),\, \varepsilon_t), \qquad \varepsilon_t \sim p(\varepsilon),
    \label{eq:reparam_general}
\end{equation}
then $x_t$ retains a differentiable dependence on $\theta$ through $\phi_\theta$. Many continuous families admit exact reparameterization; see \Cref{app:extended_related_work} for a survey. For discrete variables, continuous relaxations provide differentiable approximations (\Cref{sec:bg_discrete}).

\paragraph{Reparameterized trajectory.}
When all steps admit reparameterization, we can unroll the entire trajectory as a differentiable function of noise. Define the \emph{reparameterized history} $\chi_{0:N}(\theta)$ recursively:
\begin{equation}
    \chi_t(\theta) = S(\phi_\theta(\chi_{<t}(\theta)),\, \varepsilon_t), \qquad \varepsilon_{0:N} \sim p(\varepsilon),
    \label{eq:chi_recursive}
\end{equation}
with $\chi_t(\theta) = x_t$ in value. By construction, $\chi_t$ depends on $\theta$ both directly through $\phi_\theta$ and indirectly through all earlier reparameterized variables $\chi_{<t}(\theta)$.

\paragraph{Pathwise estimator.}
When both the sampler and reward are differentiable, the reparameterization trick yields the pathwise gradient~\citep{kingma2014,rezende2014stochastic}:
\begin{equation}
    \nabla_\theta J = \E_{\varepsilon_{0:N}}\Bigl[\nabla_\theta\, r\bigl(\chi_{0:N}(\theta)\bigr)\Bigr],
    \label{eq:pathwise}
\end{equation}
which propagates credit recursively through the trajectory via BPTT. Each $\nabla_\theta \chi_t$ depends on $\theta$ directly and through all earlier $\chi_{<t}$, yielding the recursion:
\begin{equation}
    \nabla_\theta \chi_t = \frac{\partial S}{\partial \phi}\Bigl[\frac{\partial \phi_\theta}{\partial \theta} + \sum_{j < t} \frac{\partial \phi_\theta}{\partial \chi_j}\nabla_\theta \chi_j\Bigr].
    \label{eq:bptt_recursion}
\end{equation}
This estimator typically has much lower variance than the score-function estimator for smooth rewards and samplers, because it uses local derivatives rather than a global scalar signal. However, it requires a differentiable reward, which is unavailable in most RL settings (e.g., binary correctness).

\paragraph{The gap.}
The score-function estimator is unbiased and does not require a differentiable reward, but discards the temporal structure of the trajectory. The pathwise estimator exploits this structure but requires differentiating through $r$. We aim to combine both: retain the score function for the current token (so that $r$ need not be differentiable) while reparameterizing the \emph{conditioning history} to inject pathwise credit-assignment information.

\subsection{Reparameterization Strategies for Discrete Tokens}
\label{sec:bg_discrete}

For continuous actions, reparameterization is exact (e.g., Gaussian location-scale). For discrete tokens, we consider two relaxation strategies.

\paragraph{Gumbel-Softmax / Straight-Through (ST).}
Sampling from a categorical distribution can be written using the Gumbel-Softmax trick~\citep{jang2017,maddison2017}: $x_t = \onehot(\arg\max(\log p_\theta(x_t \mid x_{<t}) + g_t))$ where $g_t \sim \gumbel(0,1)$. The $\arg\max$ is non-differentiable, but the Gumbel-Softmax relaxation replaces it with $\softmax((\text{logits} + g_t)/\tau)$, yielding a soft approximation. In the straight-through variant, the forward pass uses the hard one-hot sample while the backward pass uses soft gradients.

\paragraph{Embedding-level reparameterization.}
In autoregressive models, each token enters subsequent steps as a continuous embedding vector $e_{x_t} \in \R^{d_e}$ via lookup in an embedding matrix $E \in \R^{V \times d_e}$. Instead of passing a hard one-hot vector, we can compute a soft weighted average of embeddings:
\begin{equation}
    \chi_t = \softmax\bigl(\phi_\theta(\chi_{<t}) / \tau + g_t\bigr) \cdot E,
    \label{eq:bg_embedding_reparam}
\end{equation}
which produces a continuous vector in $\R^{d_e}$ that depends differentiably on $\theta$. Note that standard embedding lookup is already a matrix multiply $\onehot(x_t) \cdot E$; the relaxation simply replaces the hard one-hot with a soft weight vector, requiring no architectural changes. The distinction from ST-Gumbel is that the relaxation creates a \emph{genuinely different forward-pass input} to the next step (a smooth blend of embedding vectors), rather than using the same hard one-hot in the forward pass and soft gradients only in the backward pass.

\section{History-Reparameterized Policy Gradients}
\label{sec:method}

We now present our main contribution: a family of gradient estimators that reparameterize the conditioning history in the policy gradient, enabling pathwise credit assignment without requiring a differentiable reward.

\subsection{The History-Reparameterized Estimator}
\label{sec:hr_estimator}

Starting from the score-function estimator (\Cref{eq:reinforce}), we replace the frozen history $x_{<t}$ with its differentiable reparameterization $\chi_{<t}(\theta)$ from \Cref{eq:chi_recursive}:
\begin{equation}
    g_{\mathrm{HR}} = (r - b)\sum_{t=0}^{N} \nabla_\theta \log p_\theta\bigl(x_t \mid \chi_{<t}(\theta)\bigr),
    \label{eq:g_hr}
\end{equation}
with stop-gradient on $r$ and $b$. For exact continuous reparameterizations, $\chi_{<t}(\theta) = x_{<t}$ in value while retaining a computation graph through $\theta$; for discrete relaxations, $\chi_{<t}$ is instead a differentiable surrogate history (see \Cref{app:discrete_relaxation_caveat}). In either case, the chain rule decomposes \Cref{eq:g_hr} into two terms. We write $\partial_\theta$ for the partial derivative holding $\chi_{<t}$ fixed and $\nabla_\theta$ for the total derivative:
\begin{equation}
    g_{\mathrm{HR}} = (r - b)\sum_{t=0}^{N} \biggl[\underbrace{\partial_\theta \log p_\theta(x_t \mid \chi_{<t})}_{\text{score term } S_t} + \underbrace{\sum_{j < t} \bigl(\nabla_\theta \chi_j\bigr)^\top \frac{\partial \log p_\theta(x_t \mid \chi_{<t})}{\partial \chi_j}}_{\text{history-pathwise term } C_t}\biggr],
    \label{eq:g_hr_decomp}
\end{equation}
where we write $\chi_{<t}$ for $\chi_{<t}(\theta)$. In the exact continuous case, $\chi_j = x_j$ and the score term reduces to standard REINFORCE; in the relaxed discrete case, evaluation is at the soft history. Defining
\begin{equation}
    S = \sum_{t=0}^{N} S_t, \qquad C = \sum_{t=0}^{N} C_t, \qquad H = (r - b)\,C,
    \label{eq:SC_def}
\end{equation}
we can write compactly:
\begin{equation}
    g_{\mathrm{HR}} = \underbrace{(r-b)\,S}_{g_{\mathrm{PG}}^b} + \underbrace{(r-b)\,C}_{H} = g_{\mathrm{PG}}^b + H.
    \label{eq:g_hr_compact}
\end{equation}
The first term is the standard baselined policy gradient; the second injects history-pathwise credit assignment via BPTT through the reparameterized trajectory. Backpropagating through a model's own relaxed, autoregressively generated discrete sequence was previously used to train discrete-bottleneck sequence autoencoders~\citep{amani2024sigmaae}; here we use such history gradients inside the policy-gradient score function.

\subsection{Bias Characterization}
\label{sec:bias}

The history-pathwise term $H$ generally has nonzero expectation, making $g_{\mathrm{HR}}$ biased. The following proposition characterizes the bias exactly.

\begin{proposition}[Bias of the HR estimator]
\label{prop:bias}
Let $g_{\mathrm{PG}}^b$ be the baselined REINFORCE estimator with $\E[g_{\mathrm{PG}}^b] = \nabla_\theta J$. Then
\begin{equation}
    \E[g_{\mathrm{HR}}] = \nabla_\theta J + \mathrm{Bias}(\theta),
    \label{eq:bias_identity}
\end{equation}
where
\begin{equation}
    \mathrm{Bias}(\theta) = \sum_{t=0}^{N} \E\Bigl[(r - b)\bigl(\nabla_\theta \log p_\theta(x_t \mid \chi_{<t}(\theta)) - \nabla_\theta \log p_\theta(x_t \mid x_{<t})\bigr)\Bigr].
    \label{eq:bias_term}
\end{equation}
\end{proposition}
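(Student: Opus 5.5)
This is essentially an add-and-subtract identity, so the plan is to decompose $g_{\mathrm{HR}}$ pointwise, i.e.\ for each realization of the noise $\varepsilon_{0:N}$ (equivalently of the sampled trajectory $x_{0:N}$), and then take expectations. Starting from the definition \Cref{eq:g_hr}, I will add and subtract the frozen-history score inside the sum:
\begin{equation*}
g_{\mathrm{HR}} = (r-b)\sum_{t=0}^{N} \nabla_\theta \log p_\theta(x_t \mid x_{<t}) + (r-b)\sum_{t=0}^{N}\Bigl(\nabla_\theta \log p_\theta\bigl(x_t \mid \chi_{<t}(\theta)\bigr) - \nabla_\theta \log p_\theta(x_t \mid x_{<t})\Bigr).
\end{equation*}
The first summand is exactly the baselined REINFORCE estimator $g_{\mathrm{PG}}^b$ of \Cref{eq:reinforce}.

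Next I take expectations of both sides over the sampling distribution, with $r$ and $b$ treated as stop-gradient constants, as stipulated after \Cref{eq:g_hr}. By hypothesis $\E[g_{\mathrm{PG}}^b]=\nabla_\theta J$. This holds because $b$ does not depend on the sampled actions and the frozen-history score has zero conditional mean. Linearity of expectation over the finite sum in $t$ then gives \Cref{eq:bias_identity} with $\mathrm{Bias}(\theta)$ exactly as in \Cref{eq:bias_term}.

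The only point needing care is to make sense of $\nabla_\theta \log p_\theta(x_t\mid x_{<t})$ and $\nabla_\theta \log p_\theta(x_t\mid\chi_{<t}(\theta))$ as random variables on a common probability space. Here $\chi_{<t}(\theta)$ is built from the same noise $\varepsilon_{<t}$ that produced $x_{<t}$. In the exact continuous case it equals $x_{<t}$ in value but carries a computation graph. In the relaxed discrete case it is a surrogate. In both cases the expectation in \Cref{eq:bias_term} is over the joint law of $(\varepsilon_{0:N}, x_{0:N})$.

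I also need integrability of each term so that the expectation can be split. I will assume each term is integrable, i.e.\ that $r$ is bounded and the score and history gradients have finite first moments. This is the main (mild) obstacle; everything else is algebra.

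Optionally, I will remark that in the exact continuous case the difference inside \Cref{eq:bias_term} is precisely the history-pathwise term $C_t$ of \Cref{eq:g_hr_decomp}. There the partial derivative $\partial_\theta$ at $\chi_{<t}=x_{<t}$ coincides with the frozen score $S_t$, so that $\mathrm{Bias}(\theta)=\E[H]$.
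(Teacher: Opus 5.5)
Your proposal is correct and follows the paper's proof: add and subtract the frozen-history score inside $g_{\mathrm{HR}}$, invoke the assumed unbiasedness of $g_{\mathrm{PG}}^b$, and use linearity over the finite sum in $t$. Two of your additions are sound. Limiting the identification $\mathrm{Bias}(\theta)=\E[H]$ to the exact continuous case, where $S_t$ coincides with the frozen score, is appropriate care. The integrability assumption is harmless and the paper leaves it implicit.
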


\begin{proof}[Proof sketch]
By linearity and the unbiasedness of $g_{\mathrm{PG}}^b$:
\begin{align*}
    \E[g_{\mathrm{HR}}] &= \E[g_{\mathrm{PG}}^b] + \E[H] = \nabla_\theta J + \E[(r-b)\,C].
\end{align*}
Adding and subtracting the frozen-history score inside $g_{\mathrm{HR}}$ yields \Cref{eq:bias_term}. The full proof is given in \Cref{app:bias_proof}.
\end{proof}

For exact continuous reparameterizations, the sampled and reparameterized histories---and hence the evaluated log-probabilities---coincide in value. Their total derivatives nevertheless differ by the chain-rule terms through the history, whose reward-weighted expectation need not vanish, as illustrated by the Gaussian example in \Cref{sec:theory}. For discrete relaxations, the bias also includes the discrepancy from evaluating the log-probability at a surrogate history rather than the sampled history (see \Cref{app:discrete_relaxation_caveat}).

\subsection{Variance-Optimal Control Variate (OHR)}
\label{sec:ohr}

Since $H$ is biased, we cannot simply add it to $g_{\mathrm{PG}}^b$ and retain unbiasedness. Instead, we center $H$ to form a zero-mean control variate:
\begin{equation}
    \bar{H} = H - \E[H],
    \label{eq:H_bar}
\end{equation}
and construct the \emph{Optimal History-Reparameterized} (OHR) estimator:
\begin{equation}
    g_{\mathrm{OHR}} = g_{\mathrm{PG}}^b + \alpha_1\,\bar{H}.
    \label{eq:g_ohr}
\end{equation}
With exact population centering, $\E[\bar{H}] = 0$, so $g_{\mathrm{OHR}}$ is unbiased for \emph{any fixed} scalar $\alpha_1 \in \R$.

\begin{theorem}[Optimal OHR coefficient and variance reduction]
\label{thm:ohr_variance}
The scalar coefficient minimizing $\Tr(\Var(g_{\mathrm{OHR}}))$ is
\begin{equation}
    \alpha_1^* = -\frac{\Tr\bigl(\Cov(g_{\mathrm{PG}}^b,\, \bar{H})\bigr)}{\Tr\bigl(\Var(\bar{H})\bigr)},
    \label{eq:alpha1_star}
\end{equation}
and the resulting variance satisfies
\begin{equation}
    \Tr\bigl(\Var(g_{\mathrm{OHR}})\bigr) = \Tr\bigl(\Var(g_{\mathrm{PG}}^b)\bigr) - \frac{\bigl[\Tr\bigl(\Cov(g_{\mathrm{PG}}^b, \bar{H})\bigr)\bigr]^2}{\Tr\bigl(\Var(\bar{H})\bigr)} \;\leq\; \Tr\bigl(\Var(g_{\mathrm{PG}}^b)\bigr).
    \label{eq:ohr_variance_bound}
\end{equation}
\end{theorem}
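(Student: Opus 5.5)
The plan is to treat $\Tr(\Var(g_{\mathrm{OHR}}))$ as a scalar quadratic in $\alpha_1$ and minimize it directly. I will write $A = g_{\mathrm{PG}}^b$ and $B = \bar{H}$, both random vectors in $\R^p$, and take the centering $\E[\bar H]=0$ to be exact, as in \Cref{eq:H_bar}. I will also assume all second moments are finite and that $\Tr(\Var(\bar H))>0$; the degenerate case $\bar H = 0$ a.s. makes the claim trivial with any $\alpha_1$.

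The first step is bilinearity of the covariance operator, which gives
\begin{equation*}
\Var(A + \alpha_1 B) = \Var(A) + \alpha_1\bigl(\Cov(A,B) + \Cov(B,A)\bigr) + \alpha_1^2 \Var(B).
\end{equation*}
Taking traces and using $\Tr(\Cov(B,A)) = \Tr(\Cov(A,B)^\top) = \Tr(\Cov(A,B))$ yields
\begin{equation*}
f(\alpha_1) := \Tr(\Var(g_{\mathrm{OHR}})) = \Tr(\Var(A)) + 2\alpha_1 \Tr(\Cov(A,B)) + \alpha_1^2 \Tr(\Var(B)).
\end{equation*}
Equivalently, $f(\alpha_1) = \E\|A - \E A + \alpha_1 B\|^2$, which makes the nonnegativity of the leading coefficient $\Tr(\Var(B)) = \E\|B\|^2$ evident.

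Since $\Tr(\Var(B))>0$, $f$ is a strictly convex parabola. Setting $f'(\alpha_1)=0$ gives the unique minimizer $\alpha_1^* = -\Tr(\Cov(A,B))/\Tr(\Var(B))$, which is \Cref{eq:alpha1_star}. Substituting back, the linear and quadratic terms combine to $-[\Tr(\Cov(A,B))]^2/\Tr(\Var(B))$, giving the identity in \Cref{eq:ohr_variance_bound}. The inequality follows because the subtracted term is a square divided by a positive number. Alternatively, $f(\alpha_1^*) \le f(0) = \Tr(\Var(A))$ by minimality.

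The main point requiring care is bookkeeping rather than difficulty. First, the cross-covariance $\Cov(A,B) = \E[(A-\E A)B^\top]$ is not symmetric, so I must use the transpose-invariance of the trace to merge the two cross terms. Second, the minimization is over a \emph{fixed} scalar. Unbiasedness for every $\alpha_1$ relies on population centering, so the optimal coefficient and the variance formula are population-level statements. Plugging in a sample estimate of $\alpha_1^*$ or $\E[H]$ is outside the scope of the theorem, and I would note this in a remark rather than in the proof.
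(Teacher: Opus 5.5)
Your proposal is correct and follows essentially the same route as the paper's appendix proof: write $\Tr(\Var(g_{\mathrm{OHR}}))$ as a convex quadratic in $\alpha_1$, set the derivative to zero, and substitute the minimizer back. Your explicit use of trace transpose-invariance to merge the two non-symmetric cross-covariance terms is a small bookkeeping step that the paper folds into its expansion of $\E[\norm{g_c + \alpha_1\bar{H}}_F^2]$.
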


\begin{proof}[Proof sketch]
The trace of variance is a convex quadratic in $\alpha_1$. Setting its derivative to zero yields \Cref{eq:alpha1_star}. The variance reduction follows from the non-negativity of the subtracted ratio. See \Cref{app:alpha1_derivation} for the full derivation.
\end{proof}

\paragraph{Practical estimation.}
During training, we estimate the moments in \Cref{eq:alpha1_star} from the current batch, but center $H$ using an exponential moving average (EMA) of batch means across optimizer steps. Writing $\hat{\mu}_{H,k}$ for this running estimate, the batch-averaged gradient estimate at optimizer step $k$ is
\begin{equation*}
    \hat{g}_{\mathrm{OHR},k} = \frac{1}{M}\sum_{m=1}^{M}\Bigl[g_{\mathrm{PG},k}^{b,(m)} + \hat{\alpha}_{1,k}\bigl(H_k^{(m)} - \hat{\mu}_{H,k}\bigr)\Bigr].
\end{equation*}
Centering instead with the current batch mean $\frac{1}{M}\sum_m H_k^{(m)}$ would make the batch-averaged correction identically zero, recovering the PG update. The EMA-based estimator approximates population OHR; estimating the mean and coefficient does not, in general, preserve exact unbiasedness or the population variance-reduction guarantee. Implementation details are provided in \Cref{app:experiment_details}.

Our frozen-policy diagnostics estimate per-sample variances from large batches. These measurements are distinct from the variance of the batch-averaged training update, for which same-batch centering would cancel the correction.

\begin{remark}
The optimal coefficient $\alpha_1^*$ minimizes the \emph{trace} of the variance, a scalar summary. It does not minimize variance component-wise: in high-dimensional parameter spaces, a diagonal or matrix-valued coefficient could yield further gains at the cost of additional estimation overhead.
\end{remark}

\begin{remark}[MSE-minimized variant]
\label{rem:phr}
When a small bias is acceptable, one can optimize both a scaling coefficient on $g_{\mathrm{PG}}^b$ and a coefficient on $H$ jointly to minimize MSE rather than variance, yielding a two-coefficient estimator with closed-form solution via a $2\times 2$ linear system. We call this \emph{PHR} and defer its derivation to \Cref{app:phr_derivation}.
\end{remark}

\subsection{When Does History Reparameterization Help?}
\label{sec:when_hr_helps}

With exact centering and the population-optimal coefficient $\alpha_1^*$, OHR cannot increase trace-variance and strictly reduces it when the trace-covariance between $g_{\mathrm{PG}}^b$ and $\bar{H}$ is nonzero. We now characterize \emph{when} this correlation is large.

\begin{proposition}[Variance reduction characterization]
\label{prop:var_reduction}
For $\alpha_1 = \alpha_1^*$ and exact centering, the fractional trace-variance reduction of OHR over PG is
\begin{equation}
    \frac{\Tr(\Var(g_{\mathrm{PG}}^b)) - \Tr(\Var(g_{\mathrm{OHR}}))}{\Tr(\Var(g_{\mathrm{PG}}^b))} \;=\; \rho^2(g_{\mathrm{PG}}^b,\, \bar{H}),
    \label{eq:frac_var_red}
\end{equation}
where $\rho^2$ is the squared trace-correlation between $g_{\mathrm{PG}}^b$ and $\bar{H}$, defined as $\rho^2 = \Tr(\Cov(g_{\mathrm{PG}}^b, \bar{H}))^2 / [\Tr(\Var(g_{\mathrm{PG}}^b))\,\Tr(\Var(\bar{H}))]$.
This equals zero when the history-pathwise term is uncorrelated with the policy gradient, and approaches one when they are perfectly correlated.
\end{proposition}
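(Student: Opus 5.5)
The plan is to read this off directly from \Cref{thm:ohr_variance}, then handle the two limiting cases separately. Write $V_g = \Tr(\Var(g_{\mathrm{PG}}^b))$, $V_H = \Tr(\Var(\bar{H}))$ and $c = \Tr(\Cov(g_{\mathrm{PG}}^b,\bar{H}))$. I would assume $V_g>0$ and $V_H>0$ so that both the ratio and $\alpha_1^*$ are defined. If $V_H=0$, then $\bar H=0$ almost surely, and the convention $\rho^2=0$ gives zero reduction, which is consistent with the claim.

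\textbf{Step 1 (the identity).} Expanding the variance with bilinearity and taking traces gives
\[
\Tr(\Var(g_{\mathrm{PG}}^b+\alpha_1\bar H)) = V_g + 2\alpha_1 c + \alpha_1^2 V_H .
\]
This uses that the trace of the cross-covariance matrix and of its transpose agree. \Cref{thm:ohr_variance} then gives, at $\alpha_1^*=-c/V_H$,
\[
\Tr(\Var(g_{\mathrm{OHR}})) = V_g - c^2/V_H .
\]
Subtracting this from $V_g$ and dividing by $V_g$ yields $c^2/(V_g V_H)$, which is exactly $\rho^2$ as defined.

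\textbf{Step 2 (the range of $\rho^2$).} I would justify that $\rho^2\in[0,1]$ via Cauchy--Schwarz. Let $A=g_{\mathrm{PG}}^b-\E g_{\mathrm{PG}}^b$ and $B=\bar H$. Then
\[
c=\E[A^\top B],\qquad V_g=\E\|A\|^2,\qquad V_H=\E\|B\|^2 .
\]
The map $\langle A,B\rangle=\E[A^\top B]$ is an inner product on square-integrable random vectors, so $c^2\le V_g V_H$. This is the only nontrivial point, and the main care needed is to argue at the trace level rather than entrywise. Equivalently, the bound follows from the nonnegativity of the minimized variance in Step 1.

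\textbf{Step 3 (the two extremes).} If $c=0$, which is what ``uncorrelated'' means at the trace level, then $\rho^2=0$ and there is no reduction. Equality $\rho^2=1$ holds in Cauchy--Schwarz iff $A=\lambda B$ almost surely for some scalar $\lambda$, that is, when $\bar H$ is perfectly (linearly) correlated with the centered policy gradient. In that case $\alpha_1^*=-\lambda$ cancels the fluctuation entirely and the variance reduces to zero. ``Approaches one'' then follows from continuity of the ratio in $(c,V_g,V_H)$.
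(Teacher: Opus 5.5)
Your proposal is correct and matches the paper's argument: the paper also reads the identity directly off the minimized trace-variance $V_g - c^2/V_H$ from \Cref{thm:ohr_variance}. Your Cauchy--Schwarz step (or, equivalently, the nonnegativity of the minimized variance) and your equality-case analysis additionally justify $\rho^2\in[0,1]$ and the two limiting cases, which the paper leaves implicit.
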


\begin{proof}
Immediate from \Cref{eq:ohr_variance_bound} and the optimality of $\alpha_1^*$ in \Cref{eq:alpha1_star}.
\end{proof}

What governs $\rho^2$ is the temporal structure encoded in the policy's transition dynamics. The history-pathwise correction $C$ propagates credit through the Jacobian $\partial \log p_\theta(x_t \mid x_{<t}) / \partial x_j$; when this Jacobian is structured (e.g., low-rank, as in a pretrained model with learned sparse dependencies), the correction is strongly correlated with $g_{\mathrm{PG}}^b$ and $\rho^2$ is large. When the Jacobian is unstructured (e.g., at random initialization), the correction is mostly noise and $\rho^2$ is small. This predicts that HR is most beneficial in the \emph{fine-tuning regime}, where a pretrained model has already learned sequential dependencies---a prediction we confirm experimentally (\Cref{sec:exp_tmaze}).

\begin{remark}[HR as self-model-based credit assignment]
\label{rem:self_model}
The history-pathwise term $C$ queries the policy's own Jacobian to propagate credit backward through the trajectory. This is structurally similar to model-based RL in which the policy uses its own dynamics model with no auxiliary model required.
\end{remark}

\paragraph{Computational cost.}
The history-pathwise term $C$ requires one backward pass through the reparameterized trajectory, adding $O(N)$ compute relative to REINFORCE. For OHR, the batch statistics for $\alpha_1^*$ add negligible overhead. The BPTT through history can be truncated to $K < N$ steps~\citep{williams1990truncated}, trading bias for reduced memory.

\section{Analytical Validation: Gaussian Linear Markov Model}
\label{sec:theory}

To build intuition and verify the theoretical properties of the HR estimator family, we study a setting where all quantities---gradient, bias, variance, and optimal coefficients---admit closed-form expressions.

\subsection{Setup}
\label{sec:theory_setup}

Consider a Gaussian linear Markov policy parameterized by $W \in \R^{d \times d}$:
\begin{equation}
    p_W(x_t \mid x_{t-1}) = \mathcal{N}(W x_{t-1},\, I),
    \label{eq:gaussian_policy}
\end{equation}
with a fixed initial state $x_{-1} \in \R^d$. The reparameterization is exact: $\chi_t = W \chi_{t-1} + \varepsilon_t$ with $\varepsilon_t \overset{\mathrm{iid}}{\sim} \mathcal{N}(0, I)$ and $\chi_t = x_t$ in value. The log-density (up to constants) is $\log p_W(x_t \mid x_{t-1}) = -\frac{1}{2}\norm{x_t - W x_{t-1}}^2$, so the frozen-history score at step $t$ is $\partial_W \log p_W(x_t \mid x_{t-1}) = \varepsilon_t x_{t-1}^\top$.

Unrolling the chain gives $x_t = W^{t+1}x_{-1} + \sum_{k=0}^{t} W^{t-k}\varepsilon_k$, and the estimators take the form:
\begin{align}
    g_{\mathrm{PG}} &= r(x_N) \sum_{t=0}^{N} \varepsilon_t\, x_{t-1}^\top, \label{eq:gauss_gpg} \\
    C &= \sum_{t=1}^{N}\sum_{k=0}^{t-1} (W^\top)^{t-k} \varepsilon_t\, x_{k-1}^\top, \label{eq:gauss_C} \\
    g_{\mathrm{HR}} &= g_{\mathrm{PG}} + r(x_N)\,C. \label{eq:gauss_ghr}
\end{align}

\subsection{Closed-Form Results}
\label{sec:theory_results}

We derive exact expressions for $J$, $\nabla_W J$, and $\mathrm{Bias}_{\mathrm{HR}}$ under two reward functions (linear $r = a^\top x_N$ and $L^2$ reward $r = -\frac{1}{2}\|x_N - y\|^2$; full expressions in \Cref{app:gaussian_formulas}). For $N=1$ with the linear reward, the bias takes a revealing form: $\mathrm{Bias}_{\mathrm{HR}} = W^\top a\,x_{-1}^\top$, which is precisely the history-pathwise component of $\nabla_W J = a(Wx_{-1})^\top + W^\top a\,x_{-1}^\top$. The base HR estimator captures this credit-assignment signal but double-counts it; the OHR correction resolves this by optimally weighting the history-pathwise component.

\paragraph{General $N$.}
For the linear reward $r = a^\top x_N$ and arbitrary horizon $N$, the exact gradient and HR estimator terms admit the unrolled expressions:
\begin{align}
    \nabla_W J &= \sum_{k=0}^{N} (W^\top)^k a\; (W^{N-k}x_{-1})^\top, \label{eq:gauss_grad_general} \\
    g_{\mathrm{PG}} &= (a^\top x_N) \sum_{t=0}^{N} \varepsilon_t\, x_{t-1}^\top, \label{eq:gauss_gpg_general} \\
    g_{\mathrm{HR}} &= g_{\mathrm{PG}} + (a^\top x_N)\sum_{t=1}^{N}\sum_{k=0}^{t-1} (W^\top)^{t-k}\varepsilon_t\, x_{k-1}^\top. \label{eq:gauss_ghr_general}
\end{align}
The bias grows with $N$ when $\rho(W) \geq 1$, due to the accumulation of $(W^\top)^{t-k}$ terms; when $\rho(W) < 1$ the terms decay geometrically and the bias norm converges. In either regime, the OHR correction is needed to control the bias--variance tradeoff.

\paragraph{Experimental Validation.}
Monte Carlo experiments ($M = 50{,}000$) confirm that the OHR variance reduction ratio grows monotonically with horizon, reaching $1.18\times$ at $N{=}20$ for the $L^2$ reward (\Cref{app:gaussian_formulas}). In the scalar case ($d=1$), the limiting variance reduction admits the closed form $\rho^2 \to w^2/(1+w^2)$ as $N \to \infty$ for $|w| < 1$.

\section{Experiments}
\label{sec:experiments}

We evaluate OHR in three settings of increasing complexity: a minimal discrete autoregressive model that establishes embedding-level reparameterization (\Cref{eq:bg_embedding_reparam}) as the effective relaxation strategy, a sparse-reward T-maze with a recurrent policy, and an autoregressive Transformer-style model for TSP. Full hyperparameter details and supplementary figures are in \Cref{app:experiment_details,app:supp_experiments}.

\subsection{Embedding-Based Reparameterization for Discrete Tokens}
\label{sec:exp_embedding}

\paragraph{Setup.}
We use a minimal autoregressive model ($V = 8$, $d_e = 16$, embedding layer + linear head) with a sparse terminal reward for matching a fixed target sequence. We compare ST-Gumbel (straight-through hard tokens) and continuous embedding reparameterization (\Cref{eq:bg_embedding_reparam}), sweeping $\tau \in \{1.0, 2.0, 3.0\}$ and horizons ranging from $3$ to $20$, measuring variance reduction with $M = 2{,}000$ per-sample gradients over 5 seeds.

\paragraph{Results.}
ST-Gumbel provides \emph{zero} variance reduction at all temperatures and horizons (ratio $\leq 1.01$): because hard one-hot tokens are used in the forward pass, the relaxed and true histories coincide and the correction vanishes (\Cref{sec:bg_discrete}). In contrast, continuous embedding reparameterization achieves substantial and growing variance reduction: from $1.5\times$ at $N{=}3$, $\tau{=}1$ to $\mathbf{4.9\times}$ at $N{=}15$, $\tau{=}3$ (\Cref{fig:embedding_hr}, left). \textbf{PG variance grows with $N$ while OHR variance remains approximately constant}, with higher temperature increasing the reduction (\Cref{fig:embedding_hr}, center). The optimal coefficient $\alpha_1^*$ is stable at ${\approx}1.1$--$1.3$ across all horizons, and the correction is anti-correlated with the policy gradient ($\cos \approx -0.6$), confirming that it acts as an effective control variate (\Cref{fig:embedding_hr}, right). Training dynamics confirming that this oracle VR translates to faster convergence are in \Cref{app:supp_embedding}.

\begin{figure}[t]
    \centering
    \includegraphics[width=\textwidth]{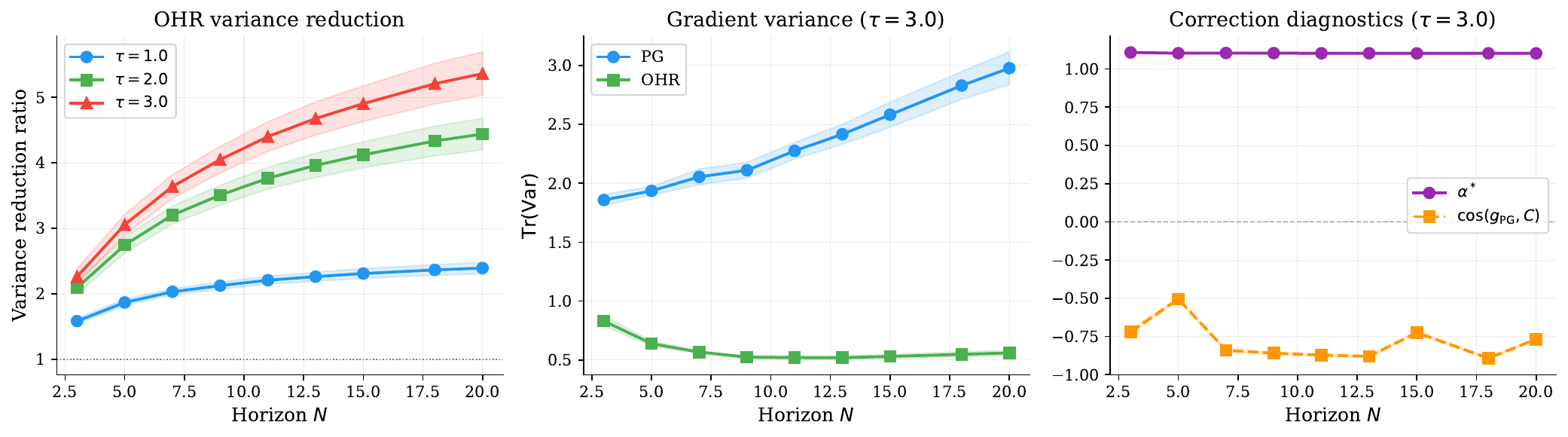}
    \caption{Embedding-based HR ($V=8$, $d_e=16$). Left: variance reduction ratio grows with horizon $N$ and temperature $\tau$. Center: PG variance (blue) grows with $N$ while OHR variance (green) remains flat. Right: $\alpha_1^* \approx 1.1$ is stable across horizons, and the correction is anti-correlated with PG ($\cos \approx -0.6$).}
    \label{fig:embedding_hr}
\end{figure}

\subsection{T-Maze: Credit Assignment with a Recurrent Policy}
\label{sec:exp_tmaze}

\paragraph{Setup.}
The T-maze~\citep{bakker2001tmaze} is a standard credit-assignment benchmark: a binary cue at $t{=}0$, $N{-}1$ uninformative observations, and a junction at $t{=}N$ where the agent must reproduce the cue (reward $+1$ correct, $0$ otherwise). The policy is a GRU~\citep{cho2014gru} ($h{=}16$) with 2-way categorical output. Actions feed back into the next GRU step via embedding reparameterization (\Cref{eq:bg_embedding_reparam}) with a learned action embedding $E_a \in \R^{2 \times h}$; in PG mode the feedback is detached. Following the prediction of \Cref{sec:when_hr_helps}, we supervised-pretrain each GRU to intermediate accuracy (${\sim}0.78$ reward) so the policy has learned temporal structure.

\paragraph{Oracle variance reduction.}
On frozen pretrained policies ($M{=}1{,}024$, 10 seeds), PG variance grows steeply with corridor length while OHR variance grows much slower, yielding a ratio that increases monotonically with~$N$: at $N{=}30$, OHR achieves $3.4\times$ variance reduction ($\tau{=}2.0$) and $3.1\times$ ($\tau{=}1.0$). Full oracle diagnostics are in \Cref{app:supp_tmaze}.

\paragraph{Training dynamics.}
\Cref{fig:tmaze_learning} compares PG and OHR training from identical initializations (10 seeds, $\tau{=}1.0$, $M{=}128$). At $N{=}10$, both methods solve the task reliably. At $N{=}15$, PG fails on some seeds while OHR succeeds on all. At $N{=}20$, PG succeeds on half the seeds while OHR solves most seeds (mean final reward $0.75$ vs.\ $0.90$). OHR maintains consistently lower estimator variance throughout training, with a $3.1\times$ ratio during active learning at $N{=}15$. The individual seed traces make the failure mode visible: PG seeds stuck at ${\sim}0.5$ correspond to cases where the high-variance estimator cannot propagate the sparse terminal signal back to the initial cue.

\begin{figure}[t]
    \centering
    \includegraphics[width=\textwidth]{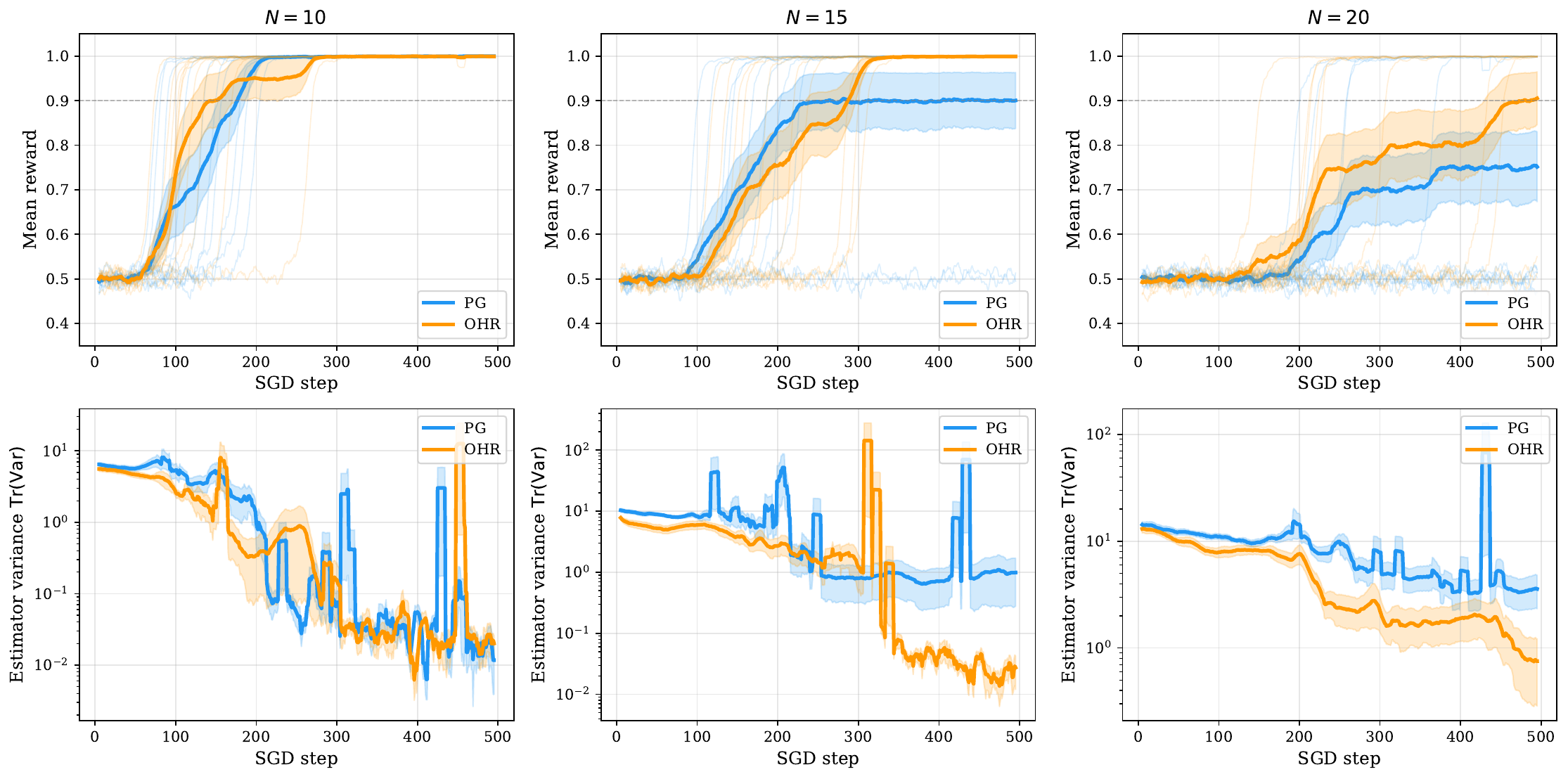}
    \caption{T-maze training: PG (blue) vs.\ OHR (orange), 10 seeds per $N$, $\tau{=}1.0$. Top: reward curves with individual seed traces (thin) and mean $\pm$ s.e.\ (thick). Bottom: estimator variance during training (log scale). As the corridor lengthens, PG increasingly fails to converge while OHR remains reliable.}
    \label{fig:tmaze_learning}
\end{figure}

\subsection{TSP with Autoregressive Construction}
\label{sec:exp_tsp}

We test HR on TSP, the canonical benchmark for autoregressive RL in combinatorial optimization~\citep{bello2017neural,kool2019attention}, where tour construction is inherently sequential and the reward is terminal. We use a simplified Attention Model~\citep{kool2019attention} (${\sim}120$K parameters; architecture and hyperparameters in \Cref{app:experiment_details}), replacing the hard embedding feedback with a soft mixture via \Cref{eq:bg_embedding_reparam}.

Both methods converge to similar final tour lengths, but OHR achieves $1.15$--$1.25\times$ lower gradient variance throughout training (see \Cref{fig:tsp_learning} in \Cref{app:supp_tsp}). The effect is smaller than for the embedding model ($4.9\times$) or T-maze ($3.4\times$), reflecting the attention decoder's sensitivity to context perturbations. The small batch size ($M{=}16$) is critical: at $M{=}128$ the per-sample VR has negligible impact, but at $M{=}16$ it translates to visibly smoother optimization.

\subsection{Composability with Reward-Side Methods}
\label{sec:exp_composability}

Since HR operates on the score function while reward-side methods (baselines, critics) operate on the advantage $(r-b)$, the two should compose. We verify this on the T-maze with a learned critic and on TSP with the RLOO baseline~\citep{kool2019buy}. On TSP, the reductions compose near-multiplicatively: at $n{=}10$, RLOO alone gives $2.45\times$, OHR alone gives $1.27\times$, and OHR + RLOO gives $\mathbf{3.55\times}$ (vs.\ a product prediction of $3.11\times$). On the T-maze, OHR reduces variance with a learned critic at all corridor lengths tested (up to $1.70\times$ at $N{=}15$). However, OHR does not reduce variance in every reported configuration: with a batch-mean baseline, the PG/OHR variance ratio is $0.86\times$ for the T-maze at $N{=}10$ and $0.94\times$ for TSP at $n{=}8$ (\Cref{tab:composability}). Full results and learning curves are in \Cref{app:supp_composability}.

\section{Related Work}
\label{sec:related_work}

HR is closest to control-variate methods for policy-gradient and discrete-gradient estimation. REBAR~\citep{tucker2017} and RELAX~\citep{grathwohl2018} construct control variates by reparameterizing the reward or a learned surrogate; HR instead reparameterizes the conditioning history in the score function, exploiting autoregressive structure without requiring a differentiable reward. Q-Prop~\citep{gu2017qprop} uses reparameterized critic gradients as control variates in continuous-action RL, but relies on an off-policy critic and environment-provided states, whereas HR uses the policy's own autoregressive dynamics.

\vspace{-0.5em}

\paragraph{Stochastic computation graphs.}
\citet{schulman2015scg} provide the general framework for combining score-function and pathwise estimators in computation graphs with stochastic and deterministic nodes; DiCE~\citep{foerster2018dice} extends this to higher-order estimators. HR can be viewed as upgrading autoregressive history edges from detached stochastic nodes to reparameterized deterministic nodes, exposing additional pathwise gradient paths.

\vspace{-0.5em}

\paragraph{Credit assignment in sequences.}
RL methods for language models, including PPO~\citep{schulman2017ppo} and GRPO~\citep{shao2024grpo}, typically rely on sequence-level score-function gradients with learned or group-relative baselines. Process reward models~\citep{lightman2024prm} address credit assignment by adding per-step supervision. HR is complementary: process rewards improve the advantage signal $(r-b)$, while HR reduces variance on the score-function side $\nabla_\theta \log p_\theta$.

We discuss reparameterization gradients, discrete gradient estimators, world models~\citep{hafner2020dreamer,hafner2023dreamerv3}, and additional positioning in \Cref{app:extended_related_work}.

\section{Limitations}
\label{sec:limitations}

The evaluation spans low-dimensional settings: Gaussian models ($d \leq 10$), a minimal autoregressive model ($V = 8$), a T-maze with a GRU ($N \leq 30$), and a small attention model for TSP (${\sim}120$K parameters). We do not evaluate on natural-language tasks, so the present results do not establish effectiveness for language-model reinforcement learning. Whether HR remains beneficial in high-dimensional models with large parameter counts remains to be tested. We have not tested HR with learned per-token baselines or PPO-style value functions in large autoregressive models. The temporal structure that HR exploits is characterized informally; a formal measure analogous to the scalar $w^2/(1+w^2)$ result for general architectures is an open question.
\section{Conclusion}
\label{sec:conclusion}

We introduced History-Reparameterized policy gradients, which inject pathwise credit-assignment information into the score function by reparameterizing the conditioning history rather than the reward. With exact centering and the population-optimal coefficient, OHR is unbiased and cannot increase the trace of the gradient variance. Across all settings tested, embedding-based relaxation yields variance reduction that grows monotonically with horizon. The main open question is whether these gains persist in larger models with high-dimensional parameter spaces and learned baselines.

\begin{ack}
Mohammad Hossein thanks Simone Bombari, Aryo Lotfi, and Samy Bengio for fruitful discussions.
Robert West's lab is partly supported by a Swiss National Science Foundation Starting Grant (TMSGI2\_211379).
The authors declare no competing interests.
\end{ack}

\bibliographystyle{plainnat}
\bibliography{references}

\begin{thebibliography}{41}
\providecommand{\natexlab}[1]{#1}
\providecommand{\url}[1]{\texttt{#1}}
\expandafter\ifx\csname urlstyle\endcsname\relax
  \providecommand{\doi}[1]{doi: #1}\else
  \providecommand{\doi}{doi: \begingroup \urlstyle{rm}\Url}\fi

\bibitem[Amani et~al.(2024)Amani, Baldwin, Mansouri, Josifoski, Peyrard, and
  West]{amani2024sigmaae}
Mohammad~Hossein Amani, Nicolas~Mario Baldwin, Amin Mansouri, Martin Josifoski,
  Maxime Peyrard, and Robert West.
\newblock Symbolic autoencoding for self-supervised sequence learning.
\newblock \emph{arXiv preprint arXiv:2402.10575}, 2024.
\newblock URL \url{https://arxiv.org/abs/2402.10575}.

\bibitem[Amani et~al.(2026)Amani, Lotfi, Baldwin, Bengio, Farajtabar, Abbe, and
  West]{amani2026}
Mohammad~Hossein Amani, Aryo Lotfi, Nicolas~Mario Baldwin, Samy Bengio, Mehrdad
  Farajtabar, Emmanuel Abbe, and Robert West.
\newblock {RL} for reasoning by adaptively revealing rationales.
\newblock In \emph{International Conference on Learning Representations
  (ICLR)}, 2026.
\newblock URL \url{https://arxiv.org/abs/2506.18110}.

\bibitem[Bakker(2002)]{bakker2001tmaze}
Bram Bakker.
\newblock Reinforcement learning with long short-term memory.
\newblock \emph{Advances in Neural Information Processing Systems}, 14, 2002.

\bibitem[Bello et~al.(2017)Bello, Pham, Le, Norouzi, and
  Bengio]{bello2017neural}
Irwan Bello, Hieu Pham, Quoc~V. Le, Mohammad Norouzi, and Samy Bengio.
\newblock Neural combinatorial optimization with reinforcement learning.
\newblock In \emph{International Conference on Learning Representations (ICLR),
  Workshop Track}, 2017.
\newblock URL \url{https://arxiv.org/abs/1611.09940}.

\bibitem[Bengio et~al.(2013)Bengio, L{\'e}onard, and Courville]{bengio2013}
Yoshua Bengio, Nicholas L{\'e}onard, and Aaron Courville.
\newblock Estimating or propagating gradients through stochastic neurons for
  conditional computation.
\newblock \emph{arXiv preprint arXiv:1308.3432}, 2013.
\newblock URL \url{https://arxiv.org/abs/1308.3432}.

\bibitem[Chernozhukov et~al.(2018)Chernozhukov, Chetverikov, Demirer, Duflo,
  Hansen, Newey, and Robins]{chernozhukov2018dml}
Victor Chernozhukov, Denis Chetverikov, Mert Demirer, Esther Duflo, Christian
  Hansen, Whitney Newey, and James Robins.
\newblock Double/debiased machine learning for treatment and structural
  parameters.
\newblock \emph{The Econometrics Journal}, 21\penalty0 (1):\penalty0 C1--C68,
  2018.

\bibitem[Cho et~al.(2014)Cho, van Merri{\"e}nboer, Gulcehre, Bahdanau,
  Bougares, Schwenk, and Bengio]{cho2014gru}
Kyunghyun Cho, Bart van Merri{\"e}nboer, Caglar Gulcehre, Dzmitry Bahdanau,
  Fethi Bougares, Holger Schwenk, and Yoshua Bengio.
\newblock Learning phrase representations using {RNN} encoder--decoder for
  statistical machine translation.
\newblock In \emph{Proceedings of the 2014 Conference on Empirical Methods in
  Natural Language Processing (EMNLP)}, pages 1724--1734, 2014.

\bibitem[Dong et~al.(2020)Dong, Mnih, and Tucker]{dong2020disarm}
Zhe Dong, Andriy Mnih, and George Tucker.
\newblock {DisARM}: An antithetic gradient estimator for binary latent
  variables.
\newblock In \emph{Advances in Neural Information Processing Systems}, 2020.
\newblock URL \url{https://arxiv.org/abs/2006.10680}.

\bibitem[Fan et~al.(2012)Fan, Guo, and Yu]{fan2012refitted}
Jianqing Fan, Shaojun Guo, and Ke~Yu.
\newblock Variance estimation using refitted cross-validation in ultrahigh
  dimensional regression.
\newblock \emph{Journal of the Royal Statistical Society, Series B},
  74:\penalty0 37--65, 2012.

\bibitem[Figurnov et~al.(2018)Figurnov, Mohamed, and Mnih]{figurnov2018}
Mikhail Figurnov, Shakir Mohamed, and Andriy Mnih.
\newblock Implicit reparameterization gradients.
\newblock In \emph{Advances in Neural Information Processing Systems}, 2018.
\newblock URL \url{https://arxiv.org/abs/1805.08498}.

\bibitem[Florensa et~al.(2017)Florensa, Held, Wulfmeier, Zhang, and
  Abbeel]{florensa2017reverse}
Carlos Florensa, David Held, Markus Wulfmeier, Michael Zhang, and Pieter
  Abbeel.
\newblock Reverse curriculum generation for reinforcement learning.
\newblock In \emph{Conference on Robot Learning (CoRL)}, pages 482--495, 2017.
\newblock URL \url{https://proceedings.mlr.press/v78/florensa17a.html}.

\bibitem[Foerster et~al.(2018)Foerster, Farquhar, Al-Shedivat, Rockt{\"a}schel,
  Xing, and Whiteson]{foerster2018dice}
Jakob Foerster, Gregory Farquhar, Maruan Al-Shedivat, Tim Rockt{\"a}schel,
  Eric~P. Xing, and Shimon Whiteson.
\newblock {DiCE}: The infinitely differentiable {Monte Carlo} estimator.
\newblock In \emph{International Conference on Machine Learning (ICML)}, 2018.
\newblock URL \url{https://arxiv.org/abs/1802.05098}.

\bibitem[Grathwohl et~al.(2018)Grathwohl, Choi, Wu, Roeder, and
  Duvenaud]{grathwohl2018}
Will Grathwohl, Dami Choi, Yuhuai Wu, Geoffrey Roeder, and David Duvenaud.
\newblock Backpropagation through the void: Optimizing control variates for
  black-box gradient estimation.
\newblock In \emph{International Conference on Learning Representations
  (ICLR)}, 2018.
\newblock URL \url{https://arxiv.org/abs/1711.00123}.

\bibitem[Gu et~al.(2017)Gu, Lillicrap, Ghahramani, Turner, and
  Levine]{gu2017qprop}
Shixiang Gu, Timothy Lillicrap, Zoubin Ghahramani, Richard~E. Turner, and
  Sergey Levine.
\newblock {Q-Prop}: Sample-efficient policy gradient with an off-policy critic.
\newblock In \emph{International Conference on Learning Representations
  (ICLR)}, 2017.

\bibitem[Hafner et~al.(2020)Hafner, Lillicrap, Ba, and
  Norouzi]{hafner2020dreamer}
Danijar Hafner, Timothy Lillicrap, Jimmy Ba, and Mohammad Norouzi.
\newblock Dream to control: Learning behaviors by latent imagination.
\newblock In \emph{International Conference on Learning Representations
  (ICLR)}, 2020.
\newblock URL \url{https://arxiv.org/abs/1912.01603}.

\bibitem[Hafner et~al.(2023)Hafner, Pasukonis, Ba, and
  Lillicrap]{hafner2023dreamerv3}
Danijar Hafner, Jurgis Pasukonis, Jimmy Ba, and Timothy Lillicrap.
\newblock Mastering diverse domains through world models.
\newblock \emph{arXiv preprint arXiv:2301.04104}, 2023.
\newblock URL \url{https://arxiv.org/abs/2301.04104}.

\bibitem[Jang et~al.(2017)Jang, Gu, and Poole]{jang2017}
Eric Jang, Shixiang Gu, and Ben Poole.
\newblock Categorical reparameterization with {Gumbel-Softmax}.
\newblock In \emph{International Conference on Learning Representations
  (ICLR)}, 2017.
\newblock URL \url{https://arxiv.org/abs/1611.01144}.

\bibitem[Jankowiak and Obermeyer(2018)]{jankowiak2018}
Martin Jankowiak and Fritz Obermeyer.
\newblock Pathwise derivatives beyond the reparameterization trick.
\newblock In \emph{International Conference on Machine Learning (ICML)}, 2018.
\newblock URL \url{https://arxiv.org/abs/1806.01851}.

\bibitem[Kingma and Welling(2014)]{kingma2014}
Diederik~P. Kingma and Max Welling.
\newblock Auto-encoding variational {Bayes}.
\newblock In \emph{International Conference on Learning Representations
  (ICLR)}, 2014.
\newblock URL \url{https://arxiv.org/abs/1312.6114}.

\bibitem[Kool et~al.(2019{\natexlab{a}})Kool, van Hoof, and
  Welling]{kool2019attention}
Wouter Kool, Herke van Hoof, and Max Welling.
\newblock Attention, learn to solve routing problems!
\newblock In \emph{International Conference on Learning Representations
  (ICLR)}, 2019{\natexlab{a}}.
\newblock URL \url{https://arxiv.org/abs/1803.08475}.

\bibitem[Kool et~al.(2019{\natexlab{b}})Kool, van Hoof, and
  Welling]{kool2019buy}
Wouter Kool, Herke van Hoof, and Max Welling.
\newblock Buy 4 {REINFORCE} samples, get a baseline for free!
\newblock In \emph{Deep Reinforcement Learning Meets Structured Prediction
  Workshop at ICLR}, 2019{\natexlab{b}}.

\bibitem[Kwon et~al.(2020)Kwon, Choo, Kim, Yoon, Gwon, and Min]{kwon2020pomo}
Yeong-Dae Kwon, Jinho Choo, Byoungjip Kim, Iljoo Yoon, Youngjune Gwon, and
  Seungjai Min.
\newblock {POMO}: Policy optimization with multiple optima for reinforcement
  learning.
\newblock In \emph{Advances in Neural Information Processing Systems}, 2020.
\newblock URL \url{https://arxiv.org/abs/2010.16011}.

\bibitem[Lightman et~al.(2024)Lightman, Kosaraju, Burda, Edwards, Baker, Lee,
  Leike, Schulman, Sutskever, and Cobbe]{lightman2024prm}
Hunter Lightman, Vineet Kosaraju, Yuri Burda, Harri Edwards, Bowen Baker, Teddy
  Lee, Jan Leike, John Schulman, Ilya Sutskever, and Karl Cobbe.
\newblock Let's verify step by step.
\newblock In \emph{International Conference on Learning Representations
  (ICLR)}, 2024.
\newblock URL \url{https://arxiv.org/abs/2305.20050}.

\bibitem[Maddison et~al.(2017)Maddison, Mnih, and Teh]{maddison2017}
Chris~J. Maddison, Andriy Mnih, and Yee~Whye Teh.
\newblock The {Concrete} distribution: A continuous relaxation of discrete
  random variables.
\newblock In \emph{International Conference on Learning Representations
  (ICLR)}, 2017.
\newblock URL \url{https://arxiv.org/abs/1611.00712}.

\bibitem[Mnih and Gregor(2014)]{mnih2014nvil}
Andriy Mnih and Karol Gregor.
\newblock Neural variational inference and learning in belief networks.
\newblock In \emph{International Conference on Machine Learning (ICML)}, 2014.

\bibitem[Naesseth et~al.(2017)Naesseth, Ruiz, Linderman, and
  Blei]{naesseth2016}
Christian~A. Naesseth, Francisco J.~R. Ruiz, Scott~W. Linderman, and David~M.
  Blei.
\newblock Reparameterization gradients through acceptance-rejection sampling
  algorithms.
\newblock In \emph{International Conference on Artificial Intelligence and
  Statistics (AISTATS)}, 2017.
\newblock URL \url{https://arxiv.org/abs/1610.05683}.

\bibitem[Paulus et~al.(2021)Paulus, Maddison, and Krause]{paulus2021rao}
Max~B. Paulus, Chris~J. Maddison, and Andreas Krause.
\newblock Rao-{Blackwellizing} the straight-through {Gumbel-Softmax} gradient
  estimator.
\newblock In \emph{International Conference on Learning Representations
  (ICLR)}, 2021.
\newblock URL \url{https://arxiv.org/abs/2010.04838}.

\bibitem[Ranganath et~al.(2014)Ranganath, Gerrish, and Blei]{ranganath2014bbvi}
Rajesh Ranganath, Sean Gerrish, and David~M. Blei.
\newblock Black box variational inference.
\newblock In \emph{International Conference on Artificial Intelligence and
  Statistics (AISTATS)}, 2014.

\bibitem[Rezende et~al.(2014)Rezende, Mohamed, and
  Wierstra]{rezende2014stochastic}
Danilo~Jimenez Rezende, Shakir Mohamed, and Daan Wierstra.
\newblock Stochastic backpropagation and approximate inference in deep
  generative models.
\newblock In \emph{International Conference on Machine Learning (ICML)}, 2014.

\bibitem[Ruiz et~al.(2016)Ruiz, Titsias, and Blei]{ruiz2016}
Francisco J.~R. Ruiz, Michalis~K. Titsias, and David~M. Blei.
\newblock The generalized reparameterization gradient.
\newblock In \emph{Advances in Neural Information Processing Systems}, 2016.
\newblock URL \url{https://arxiv.org/abs/1610.02287}.

\bibitem[Salimans and Chen(2018)]{salimans2018montezuma}
Tim Salimans and Richard Chen.
\newblock Learning {Montezuma's Revenge} from a single demonstration.
\newblock \emph{arXiv preprint arXiv:1812.03381}, 2018.
\newblock URL \url{https://arxiv.org/abs/1812.03381}.

\bibitem[Schulman et~al.(2015)Schulman, Heess, Weber, and
  Abbeel]{schulman2015scg}
John Schulman, Nicolas Heess, Theophane Weber, and Pieter Abbeel.
\newblock Gradient estimation using stochastic computation graphs.
\newblock In \emph{Advances in Neural Information Processing Systems}, 2015.
\newblock URL \url{https://arxiv.org/abs/1506.05254}.

\bibitem[Schulman et~al.(2016)Schulman, Moritz, Levine, Jordan, and
  Abbeel]{schulman2015}
John Schulman, Philipp Moritz, Sergey Levine, Michael~I. Jordan, and Pieter
  Abbeel.
\newblock High-dimensional continuous control using generalized advantage
  estimation.
\newblock In \emph{International Conference on Learning Representations
  (ICLR)}, 2016.
\newblock URL \url{https://arxiv.org/abs/1506.02438}.

\bibitem[Schulman et~al.(2017)Schulman, Wolski, Dhariwal, Radford, and
  Klimov]{schulman2017ppo}
John Schulman, Filip Wolski, Prafulla Dhariwal, Alec Radford, and Oleg Klimov.
\newblock Proximal policy optimization algorithms.
\newblock \emph{arXiv preprint arXiv:1707.06347}, 2017.
\newblock URL \url{https://arxiv.org/abs/1707.06347}.

\bibitem[Shao et~al.(2024)Shao, Wang, Zhu, et~al.]{shao2024grpo}
Zhihong Shao, Peiyi Wang, Qihao Zhu, et~al.
\newblock {DeepSeekMath}: Pushing the limits of mathematical reasoning in open
  language models.
\newblock \emph{arXiv preprint arXiv:2402.03300}, 2024.
\newblock URL \url{https://arxiv.org/abs/2402.03300}.

\bibitem[Tucker et~al.(2017)Tucker, Mnih, Maddison, Lawson, and
  Sohl-Dickstein]{tucker2017}
George Tucker, Andriy Mnih, Chris~J. Maddison, Dieterich Lawson, and Jascha
  Sohl-Dickstein.
\newblock {REBAR}: Low-variance, unbiased gradient estimates for discrete
  latent variable models.
\newblock In \emph{Advances in Neural Information Processing Systems}, 2017.
\newblock URL \url{https://arxiv.org/abs/1703.07370}.

\bibitem[Williams(1992)]{williams1992}
Ronald~J. Williams.
\newblock Simple statistical gradient-following algorithms for connectionist
  reinforcement learning.
\newblock \emph{Machine Learning}, 8:\penalty0 229--256, 1992.

\bibitem[Williams and Peng(1990)]{williams1990truncated}
Ronald~J. Williams and Jing Peng.
\newblock An efficient gradient-based algorithm for on-line training of
  recurrent network trajectories.
\newblock \emph{Neural Computation}, 2\penalty0 (4):\penalty0 490--501, 1990.

\bibitem[Xi et~al.(2024)Xi, Chen, Hong, Jin, Zheng, He, Ding, Liu, Guo, Wang,
  Guo, Shen, Fan, Zhou, Dou, Wang, Zhang, Sun, Gui, Zhang, and Huang]{xi2024r3}
Zhiheng Xi, Wenxiang Chen, Boyang Hong, Senjie Jin, Rui Zheng, Wei He, Yiwen
  Ding, Shichun Liu, Xin Guo, Junzhe Wang, Honglin Guo, Wei Shen, Xiaoran Fan,
  Yuhao Zhou, Shihan Dou, Xiao Wang, Xinbo Zhang, Peng Sun, Tao Gui, Qi~Zhang,
  and Xuanjing Huang.
\newblock Training large language models for reasoning through reverse
  curriculum reinforcement learning.
\newblock In \emph{International Conference on Machine Learning (ICML)}, pages
  54030--54048, 2024.
\newblock URL \url{https://proceedings.mlr.press/v235/xi24a.html}.

\bibitem[Yin and Zhou(2019)]{yin2019arm}
Mingzhang Yin and Mingyuan Zhou.
\newblock {ARM}: Augment-{REINFORCE}-merge gradient for stochastic binary
  networks.
\newblock In \emph{International Conference on Learning Representations
  (ICLR)}, 2019.
\newblock URL \url{https://arxiv.org/abs/1807.11143}.

\bibitem[Yin et~al.(2019)Yin, Yue, and Zhou]{yin2019arsm}
Mingzhang Yin, Yuguang Yue, and Mingyuan Zhou.
\newblock {ARSM}: Augment-{REINFORCE}-swap-merge estimator for gradient
  backpropagation through categorical variables.
\newblock In \emph{International Conference on Machine Learning (ICML)}, 2019.
\newblock URL \url{https://proceedings.mlr.press/v97/yin19c.html}.

\end{thebibliography}

\newpage
\appendix

\section{Proof of \texorpdfstring{\Cref{prop:bias}}{Proposition 1}: Bias Identity}
\label{app:bias_proof}

We provide the full proof of the bias characterization for the HR estimator.

\begin{proof}
Recall that the HR estimator is defined as
\begin{equation*}
    g_{\mathrm{HR}} = (r - b)\sum_{t=0}^{N}\nabla_\theta \log p_\theta(x_t \mid \chi_{<t}(\theta)).
\end{equation*}
Since $g_{\mathrm{PG}}^b = (r-b)\sum_t \nabla_\theta \log p_\theta(x_t \mid x_{<t})$ is unbiased with $\E[g_{\mathrm{PG}}^b] = \nabla_\theta J$, we write:
\begin{align*}
    \E[g_{\mathrm{HR}}] &= \E\Bigl[(r-b)\sum_{t=0}^{N}\nabla_\theta \log p_\theta(x_t \mid \chi_{<t}(\theta))\Bigr] \\
    &= \E\Bigl[(r-b)\sum_{t=0}^{N}\nabla_\theta \log p_\theta(x_t \mid x_{<t})\Bigr] \\
    &\quad + \E\Bigl[(r-b)\sum_{t=0}^{N}\bigl(\nabla_\theta \log p_\theta(x_t \mid \chi_{<t}(\theta)) - \nabla_\theta \log p_\theta(x_t \mid x_{<t})\bigr)\Bigr] \\
    &= \nabla_\theta J + \sum_{t=0}^{N}\E\Bigl[(r-b)\bigl(\nabla_\theta \log p_\theta(x_t \mid \chi_{<t}(\theta)) - \nabla_\theta \log p_\theta(x_t \mid x_{<t})\bigr)\Bigr].
\end{align*}

For exact history reparameterizations (e.g., continuous actions), the reparameterized and sampled histories coincide in value: $\chi_{<t}(\theta) = x_{<t}$. The two log-probabilities $\log p_\theta(x_t \mid \chi_{<t}(\theta))$ and $\log p_\theta(x_t \mid x_{<t})$ are then equal as scalar quantities. Their \emph{gradients} with respect to $\theta$ differ, however, because $\nabla_\theta \log p_\theta(x_t \mid \chi_{<t}(\theta))$ includes the additional chain-rule terms $\sum_{j<t}(\nabla_\theta \chi_j)^\top \frac{\partial \log p_\theta}{\partial \chi_j}$ from differentiating through the reparameterized history. These additional terms constitute the history-pathwise correction $C_t$ from \Cref{eq:g_hr_decomp}. For discrete relaxations where $\chi_{<t} \neq x_{<t}$ in value, the same decomposition applies but the log-probabilities may also differ; see \Cref{app:discrete_relaxation_caveat}.

Identifying the bias term with $\E[H] = \E[(r-b)\,C]$ completes the proof.
\end{proof}

\section{Continuous vs.\ Discrete History Reparameterization}
\label{app:discrete_relaxation_caveat}

The theory in \Cref{sec:bias} assumes that the reparameterized history satisfies $\chi_{<t}(\theta) = x_{<t}$ in value, so that the log-probability $\log p_\theta(x_t \mid \chi_{<t})$ equals $\log p_\theta(x_t \mid x_{<t})$ pointwise and only their total derivatives with respect to $\theta$ differ. This holds exactly for continuous reparameterizations (e.g., the Gaussian location-scale family), where $\chi_t = S(\phi_\theta(\chi_{<t}), \varepsilon_t) = x_t$.

For discrete tokens with embedding-level relaxation (\Cref{eq:bg_embedding_reparam}), the situation is different. The relaxed history $\chi_t = \softmax(\phi_\theta(\chi_{<t})/\tau + g_t) \cdot E$ is a weighted average of embedding rows, which does not equal the hard embedding $\onehot(x_t) \cdot E$ unless the softmax output is degenerate. Consequently, $\log p_\theta(x_t \mid \chi_{<t})$ and $\log p_\theta(x_t \mid x_{<t})$ may differ not only in gradient but also in value.

The OHR estimator (\Cref{eq:g_ohr}) remains valid in this setting because it uses $\bar{H} = H - \E[H]$ as a control variate: at the population level, any zero-mean random variable produces an unbiased estimator when added to $g_{\mathrm{PG}}^b$, regardless of how it is constructed. The variance-reduction guarantee of \Cref{thm:ohr_variance} likewise holds. What changes is the interpretation: for continuous actions, the HR correction is an exact history-pathwise gradient; for discrete tokens with embedding relaxation, it is a relaxation-based control variate that queries the policy's response to soft perturbations of the conditioning history. The bias characterization of \Cref{prop:bias} applies to the raw $g_{\mathrm{HR}}$ estimator in both cases, but the ``bias'' term has a larger magnitude in the discrete case because it includes the discrepancy from evaluating the log-probability at a different input, not just from differentiating through the history.

\section{Derivation of Optimal \texorpdfstring{$\alpha_1^*$}{alpha\_1*} (OHR)}
\label{app:alpha1_derivation}

We minimize the trace of variance of the OHR estimator $g_{\mathrm{OHR}} = g_{\mathrm{PG}}^b + \alpha_1 \bar{H}$ over $\alpha_1 \in \R$, where $\bar{H} = H - \E[H]$ is the centered history-pathwise term with $\E[\bar{H}] = 0$.

\begin{proof}
Define $g_c = g_{\mathrm{PG}}^b - \nabla_\theta J$ (centered PG estimator). Since $\E[\bar{H}] = 0$, the OHR estimator has mean $\E[g_{\mathrm{OHR}}] = \nabla_\theta J$ for any $\alpha_1$, and:
\begin{align*}
    \Tr(\Var(g_{\mathrm{OHR}})) &= \E\bigl[\norm{g_c + \alpha_1 \bar{H}}_F^2\bigr] \\
    &= \E[\norm{g_c}_F^2] + \alpha_1^2\,\E[\norm{\bar{H}}_F^2] + 2\alpha_1\,\E[\Tr(g_c^\top \bar{H})] \\
    &= \Tr(\Var(g_{\mathrm{PG}}^b)) + \alpha_1^2\,\Tr(\Var(\bar{H})) + 2\alpha_1\,\Tr(\Cov(g_{\mathrm{PG}}^b, \bar{H})).
\end{align*}

This is a convex quadratic in $\alpha_1$. The coefficient of $\alpha_1^2$ is $\Tr(\Var(\bar{H})) > 0$ for non-degenerate $\bar{H}$ (if $\bar{H} = 0$ a.s., the history-pathwise correction is constant and there is nothing to optimize). Setting the derivative to zero:
\begin{equation*}
    \frac{\partial}{\partial \alpha_1}\Tr(\Var(g_{\mathrm{OHR}})) = 2\alpha_1\,\Tr(\Var(\bar{H})) + 2\,\Tr(\Cov(g_{\mathrm{PG}}^b, \bar{H})) = 0,
\end{equation*}
yielding
\begin{equation*}
    \alpha_1^* = -\frac{\Tr(\Cov(g_{\mathrm{PG}}^b, \bar{H}))}{\Tr(\Var(\bar{H}))}.
\end{equation*}

Substituting back and simplifying:
\begin{align*}
    \Tr(\Var(g_{\mathrm{OHR}})) &= \Tr(\Var(g_{\mathrm{PG}}^b)) + \frac{[\Tr(\Cov(g_{\mathrm{PG}}^b, \bar{H}))]^2}{\Tr(\Var(\bar{H}))} - \frac{2[\Tr(\Cov(g_{\mathrm{PG}}^b, \bar{H}))]^2}{\Tr(\Var(\bar{H}))} \\
    &= \Tr(\Var(g_{\mathrm{PG}}^b)) - \frac{[\Tr(\Cov(g_{\mathrm{PG}}^b, \bar{H}))]^2}{\Tr(\Var(\bar{H}))} \\
    &\leq \Tr(\Var(g_{\mathrm{PG}}^b)),
\end{align*}
where the inequality follows from the non-negativity of the subtracted term. Equality holds if and only if $\Tr(\Cov(g_{\mathrm{PG}}^b, \bar{H})) = 0$, i.e., when the history-pathwise correction is uncorrelated with the policy gradient in the trace inner product.
\end{proof}

\section{PHR Optimal Coefficients}
\label{app:phr_derivation}

We derive the MSE-optimal coefficients $(\alpha_0^*, \alpha_1^*)$ for the PHR estimator $g_{\mathrm{PHR}} = \alpha_0\,g_{\mathrm{PG}}^b + \alpha_1\,H$.

\begin{proof}
The MSE objective is:
\begin{align*}
    f(\alpha_0, \alpha_1) &= \E[\norm{\alpha_0\,g_{\mathrm{PG}}^b + \alpha_1\,H - \nabla_\theta J}^2] \\
    &= \alpha_0^2 A + \alpha_1^2 B + \norm{\nabla_\theta J}^2 + 2\alpha_0\alpha_1 D - 2\alpha_0 E - 2\alpha_1 F,
\end{align*}
where $A = \E[\norm{g_{\mathrm{PG}}^b}^2]$, $B = \E[\norm{H}^2]$, $D = \E[{g_{\mathrm{PG}}^b}^\top H]$, $E = \E[{g_{\mathrm{PG}}^b}^\top \nabla_\theta J]$, and $F = \E[H^\top \nabla_\theta J]$.

This is a convex quadratic in $(\alpha_0, \alpha_1)$. The Hessian is $2\begin{psmallmatrix} A & D \\ D & B \end{psmallmatrix}$, which is positive semidefinite since $A, B \geq 0$ and $AB \geq D^2$ by the Cauchy--Schwarz inequality applied to the inner product $\E[u^\top v]$ with $u = g_{\mathrm{PG}}^b$ and $v = H$.

Setting the gradient to zero:
\begin{align*}
    \frac{\partial f}{\partial \alpha_0} &= 2\alpha_0 A + 2\alpha_1 D - 2E = 0, \\
    \frac{\partial f}{\partial \alpha_1} &= 2\alpha_0 D + 2\alpha_1 B - 2F = 0,
\end{align*}
which gives the $2\times 2$ system:
\begin{equation*}
    \begin{pmatrix} A & D \\ D & B \end{pmatrix}\begin{pmatrix} \alpha_0 \\ \alpha_1 \end{pmatrix} = \begin{pmatrix} E \\ F \end{pmatrix}.
\end{equation*}
When $AB - D^2 \neq 0$ (i.e., $g_{\mathrm{PG}}^b$ and $H$ are not perfectly collinear), the unique solution is:
\begin{equation*}
    \alpha_0^* = \frac{BE - DF}{AB - D^2}, \qquad \alpha_1^* = \frac{AF - DE}{AB - D^2}.
\end{equation*}

\paragraph{Independent batch trick.}
The quantities $A$, $B$, $D$ involve second moments of the estimators and can be consistently estimated from a single batch $\mathcal{B}$. However, $E = \E[{g_{\mathrm{PG}}^b}^\top \nabla_\theta J]$ and $F = \E[H^\top \nabla_\theta J]$ involve the unknown true gradient $\nabla_\theta J$.

To estimate $E$ and $F$ without knowing $\nabla_\theta J$, we draw an independent batch $\mathcal{B}'$ and compute an independent REINFORCE estimate $g_{\mathrm{PG}}' = (r' - b)S'$ from $\mathcal{B}'$. Since $g_{\mathrm{PG}}'$ is unbiased for $\nabla_\theta J$ and is independent of the batch-$\mathcal{B}$ quantities, the products
\begin{equation*}
    \hat{E} = {g_{\mathrm{PG}}^b}^\top g_{\mathrm{PG}}', \qquad \hat{F} = H^\top g_{\mathrm{PG}}'
\end{equation*}
are unbiased estimators of $E$ and $F$ respectively:
\begin{equation*}
    \E[\hat{E}] = \E_{\mathcal{B}}[g_{\mathrm{PG}}^b]^\top \E_{\mathcal{B}'}[g_{\mathrm{PG}}'] = (\nabla_\theta J)^\top (\nabla_\theta J) = \|\nabla_\theta J\|^2 = E,
\end{equation*}
where the factorization uses independence of $\mathcal{B}$ and $\mathcal{B}'$. The argument for $\hat{F}$ is analogous, using $\E_{\mathcal{B}'}[g_{\mathrm{PG}}'] = \nabla_\theta J$. This doubles the sample cost per step but yields a fully online, closed-form coefficient estimator with no hyperparameter tuning. The sample waste can be mitigated via \emph{cross-fitting}~\citep{fan2012refitted,chernozhukov2018dml}: swap the roles of $\mathcal{B}$ and $\mathcal{B}'$ to obtain a second estimate, then average, so that every trajectory contributes to a gradient estimate.
\end{proof}

\section{Extended Related Work}
\label{app:extended_related_work}

\paragraph{Stochastic computation graphs.}
The stochastic computation graph framework~\citep{schulman2015scg} and its extension DiCE~\citep{foerster2018dice} are discussed in \Cref{sec:related_work}. We add one technical note: DiCE targets correct higher-order gradient estimators by automatically tracking which nodes are ``on the graph.'' HR's distinction between frozen and reparameterized history relates to this bookkeeping, though HR targets first-order variance reduction rather than higher-order derivatives.

\paragraph{Reparameterization gradients.}
The reparameterization trick~\citep{kingma2014,rezende2014stochastic} has been generalized beyond location-scale families to distributions amenable to approximate standardization~\citep{ruiz2016}, implicit reparameterization via the implicit function theorem~\citep{figurnov2018}, acceptance-rejection schemes~\citep{naesseth2016}, and optimal-transport constructions~\citep{jankowiak2018}. For discrete distributions, Gumbel-Softmax~\citep{jang2017} and Concrete~\citep{maddison2017} relaxations provide biased but low-variance pathwise gradients. Our work applies reparameterization to the \emph{conditioning history} within a policy gradient, rather than to the sampling distribution itself.

\paragraph{Discrete gradient estimation.}
The high variance of REINFORCE~\citep{williams1992} for discrete variables has motivated multi-sample and leave-one-out baselines~\citep{kool2019buy}, score-function/relaxation hybrids such as REBAR~\citep{tucker2017} and its learned-surrogate extension RELAX~\citep{grathwohl2018}, the straight-through estimator~\citep{bengio2013}, antithetic and Rao--Blackwellized methods~\citep{yin2019arm,dong2020disarm,yin2019arsm}, and foundational stochastic backpropagation techniques~\citep{rezende2014stochastic,mnih2014nvil}. These methods primarily target single-step or parallel discrete choices; while extensions to sequential settings exist~\citep{paulus2021rao}, they do not exploit autoregressive conditioning structure---the specific gap that HR addresses.

\paragraph{Credit assignment in sequences.}
GRPO, PPO, and process reward models are discussed in \Cref{sec:related_work}. In continuous-action RL, Q-Prop~\citep{gu2017qprop} uses a reparameterized critic gradient as a control variate for policy gradients, but relies on an off-policy critic and environment-provided states rather than the autoregressive conditioning structure. HR leverages the specific structural assumption that in autoregressive models, the ``environment'' is a concatenation of past actions, making the conditioning context differentiable with respect to the policy parameters.

\paragraph{Curriculum learning.}
Curriculum methods can ease sparse-reward learning by simplifying tasks or narrowing the search space~\citep{florensa2017reverse,salimans2018montezuma,xi2024r3,amani2026}; HR instead modifies the gradient estimator.

\paragraph{Model-based RL and world models.}
Dreamer~\citep{hafner2020dreamer} and DreamerV3~\citep{hafner2023dreamerv3} learn a differentiable world model and backpropagate through imagined trajectories to compute policy gradients. HR is structurally analogous: the policy serves as its own dynamics model, with no auxiliary model required (\Cref{rem:self_model}). The key difference is that Dreamer operates in the standard MDP setting and requires learning a separate world model, while HR exploits the fact that in autoregressive generation the policy \emph{already is} the dynamics model. DreamerV3 uses straight-through gradients for discrete actions in the learned world model---closely related to our ST-Gumbel baseline---while HR's embedding-level relaxation provides a richer differentiable path.

\paragraph{Positioning of HR.}
HR occupies a distinct position in this landscape. Unlike REBAR/RELAX, which construct control variates from differentiable relaxations of the sampling process, HR reparameterizes the \emph{conditioning history} and does not require a differentiable reward. Unlike full pathwise estimation (\Cref{sec:bg_reparam}), HR retains the score function for the current token and does not differentiate through $r$. Unlike world-model methods, HR requires no auxiliary model---it reparameterizes through the policy's own autoregressive structure. The OHR variant provides a closed-form mechanism for controlling the bias--variance tradeoff, in contrast to RELAX's learned control variate.

\section{Complete Gaussian Model Formulas}
\label{app:gaussian_formulas}

We collect all closed-form expressions for the Gaussian linear Markov model from \Cref{sec:theory}.

\subsection{General Setup}

Policy: $p_W(x_t \mid x_{t-1}) = \mathcal{N}(Wx_{t-1}, I)$, with $x_t = Wx_{t-1} + \varepsilon_t$, $\varepsilon_t \overset{\mathrm{iid}}{\sim} \mathcal{N}(0, I)$.

Unrolled trajectory: $x_t = W^{t+1}x_{-1} + \sum_{k=0}^{t}W^{t-k}\varepsilon_k$.

Score: $\partial_W \log p_W(x_t \mid x_{t-1}) = \varepsilon_t x_{t-1}^\top$.

History-pathwise correction: for the linear model, $\frac{\partial \log p_W(x_t \mid x_{t-1})}{\partial x_j} = (W^\top)^{t-j} \varepsilon_t$ for $j < t$, obtained by applying the chain rule through the reparameterized dynamics $\chi_t = W\chi_{t-1} + \varepsilon_t$.

\subsection{Linear Reward, General $N$}

\begin{align*}
    r &= a^\top x_N, \qquad J = a^\top W^{N+1}x_{-1}, \\
    \nabla_W J &= \sum_{k=0}^{N}(W^\top)^k a\;(W^{N-k}x_{-1})^\top, \\
    g_{\mathrm{PG}} &= (a^\top x_N)\sum_{t=0}^{N}\varepsilon_t\,x_{t-1}^\top, \\
    C &= \sum_{t=1}^{N}\sum_{k=0}^{t-1}(W^\top)^{t-k}\varepsilon_t\,x_{k-1}^\top, \\
    \mathrm{Bias}_{\mathrm{HR}} &= \E\bigl[(a^\top x_N - J)\,C\bigr] = \sum_{t=1}^{N}\sum_{k=0}^{t-1}(W^\top)^{t-k}\,\E[\varepsilon_t\,(a^\top x_N)\, x_{k-1}^\top].
\end{align*}

The bias computation requires expanding $a^\top x_N = a^\top W^{N+1}x_{-1} + \sum_{\ell=0}^{N}a^\top W^{N-\ell}\varepsilon_\ell$ and using $\E[\varepsilon_t \varepsilon_\ell^\top] = I\,\delta_{t\ell}$ and $\E[\varepsilon_t] = 0$. For $N=1$, this reduces to $\mathrm{Bias}_{\mathrm{HR}} = W^\top a\, x_{-1}^\top$.

\subsection{$L^2$ Reward, $N=1$}

\begin{align*}
    r &= -\tfrac{1}{2}\norm{x_1 - y}^2, \qquad \delta = W^2x_{-1} - y, \\
    J &= -\tfrac{1}{2}\bigl(\norm{\delta}^2 + \Tr(WW^\top) + d\bigr), \\
    \nabla_W J &= -\delta(Wx_{-1})^\top - W^\top\delta\,x_{-1}^\top - W, \\
    \mathrm{Bias}_{\mathrm{HR}} &= -W^\top\delta\,x_{-1}^\top.
\end{align*}

The $-W$ term in $\nabla_W J$ arises from the variance of the noise: $\E_{\varepsilon_1}[\varepsilon_1\varepsilon_1^\top] = I$ contributes a gradient through the quadratic reward. This term is captured by neither the score-function estimator nor the history-pathwise correction alone, but is recovered by their combination in the OHR framework.

\subsection{Variance Reduction and Credit Assignment}
\label{app:theory_variance}

We validate the theoretical predictions by computing the trace of variance, MSE, and SNR for each estimator using $M = 50{,}000$ Monte Carlo trajectories, comparing against the closed-form gradient.

\paragraph{OHR achieves growing variance reduction with horizon.}
Fixing $W$ with spectral radius $\rho(W) = 0.95$ and sweeping $N \in \{1, \ldots, 20\}$, the variance reduction ratio $\Tr(\Var(g_{\mathrm{PG}}^b)) / \Tr(\Var(g_{\mathrm{OHR}}))$ increases monotonically with $N$, reaching $1.08\times$ for the linear reward and $1.18\times$ for the $L^2$ reward at $N = 20$ (\Cref{fig:gaussian_diagnostics}).
This confirms that history reparameterization is most beneficial precisely when credit assignment is hardest. 
The OHR estimator also achieves higher SNR than baselined PG at all horizons tested, and the optimal coefficient $\alpha_1^*$ stabilizes at moderate horizons.

\paragraph{Scalar closed form.}
In the scalar case ($d = 1$), the limiting variance reduction admits the closed form $\rho^2 \to w^2/(1 + w^2)$ as $N \to \infty$ for $|w| < 1$, which is strictly positive for any $w \neq 0$ and increases with $|w|$.

\begin{figure}[t]
    \centering
    \includegraphics[width=\textwidth]{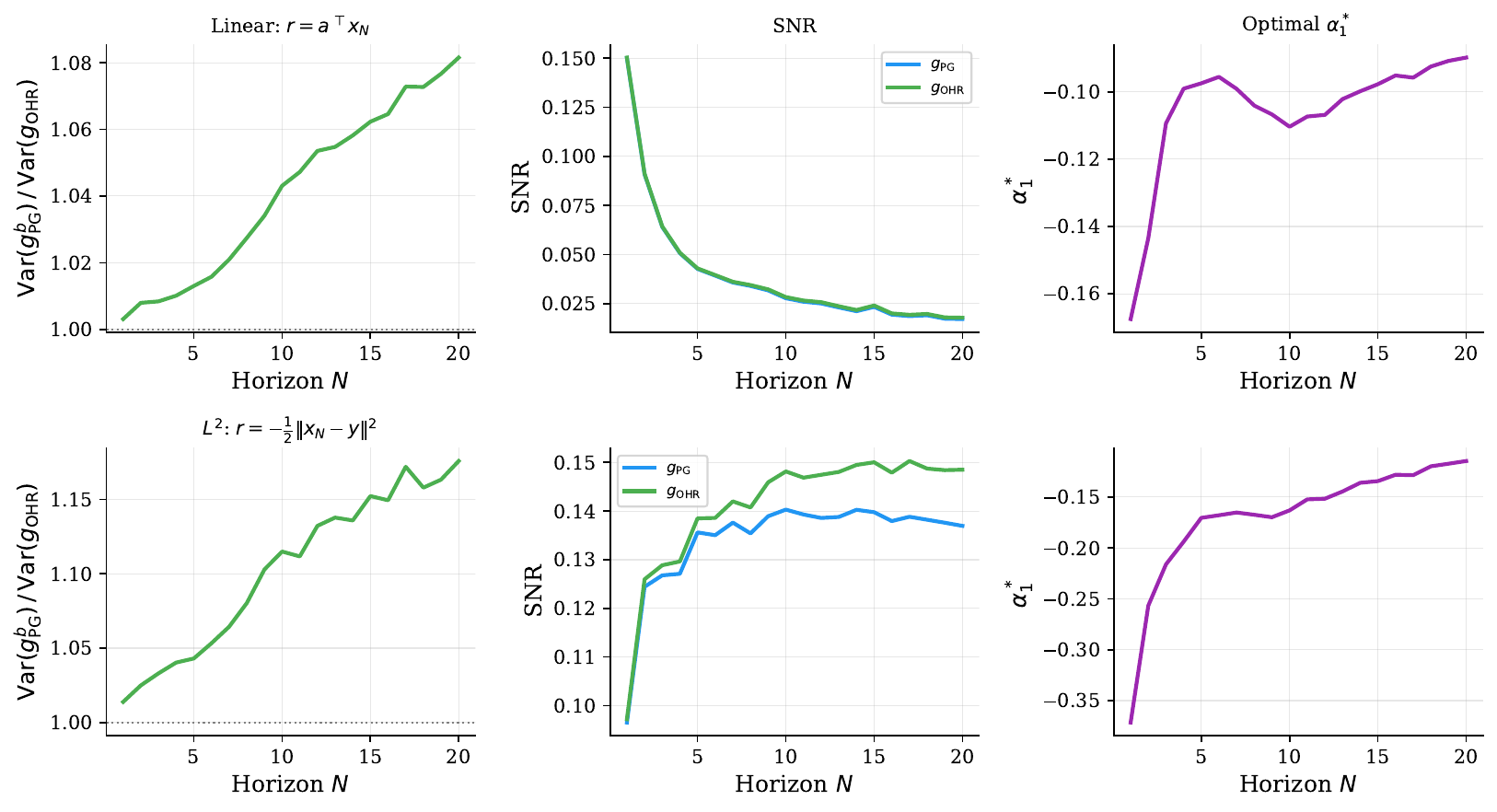}
    \caption{Gaussian linear Markov model ($d=5$, $\rho(W) = 0.95$). Top: linear reward. Bottom: $L^2$ reward. Left: variance reduction ratio (OHR over baselined PG), growing monotonically with $N$ to $1.08\times$ (linear) and $1.18\times$ ($L^2$). Center: SNR of both estimators. Right: optimal coefficient $\alpha_1^*$.}
    \label{fig:gaussian_diagnostics}
\end{figure}

\section{Supplementary Experiments}
\label{app:supp_experiments}

This section collects supplementary figures and analysis for the experiments in \Cref{sec:experiments}, organized by task.

\subsection{Embedding-Based Reparameterization: Training Dynamics}
\label{app:supp_embedding}

To test whether the oracle variance reduction from \Cref{sec:exp_embedding} translates to faster training, \Cref{fig:training_embedding} reports reward and per-step trace-variance for PG and OHR across $N \in \{5, 15, 25\}$ (using $V = 32$, $d_e = 8$, with reward for matching a fixed final target token; see \Cref{app:experiment_details} for full details). OHR converges faster and reaches a higher final reward at every horizon, with the gap widening as $N$ grows: at short horizons it acts mainly as a speed-up over PG, at intermediate horizons it becomes the difference between solving and stalling, and at long horizons both estimators struggle but OHR retains a consistent lead. OHR's per-sample variance is consistently lower than PG's at $N=15$ and $N=25$. The variance spikes at $N=5$ are a saturation artifact: once the task is solved, $r-b\!\approx\!0$ across the batch and both estimators' variances collapse to the floating-point floor.

\begin{figure}[t]
    \centering
    \includegraphics[width=\textwidth]{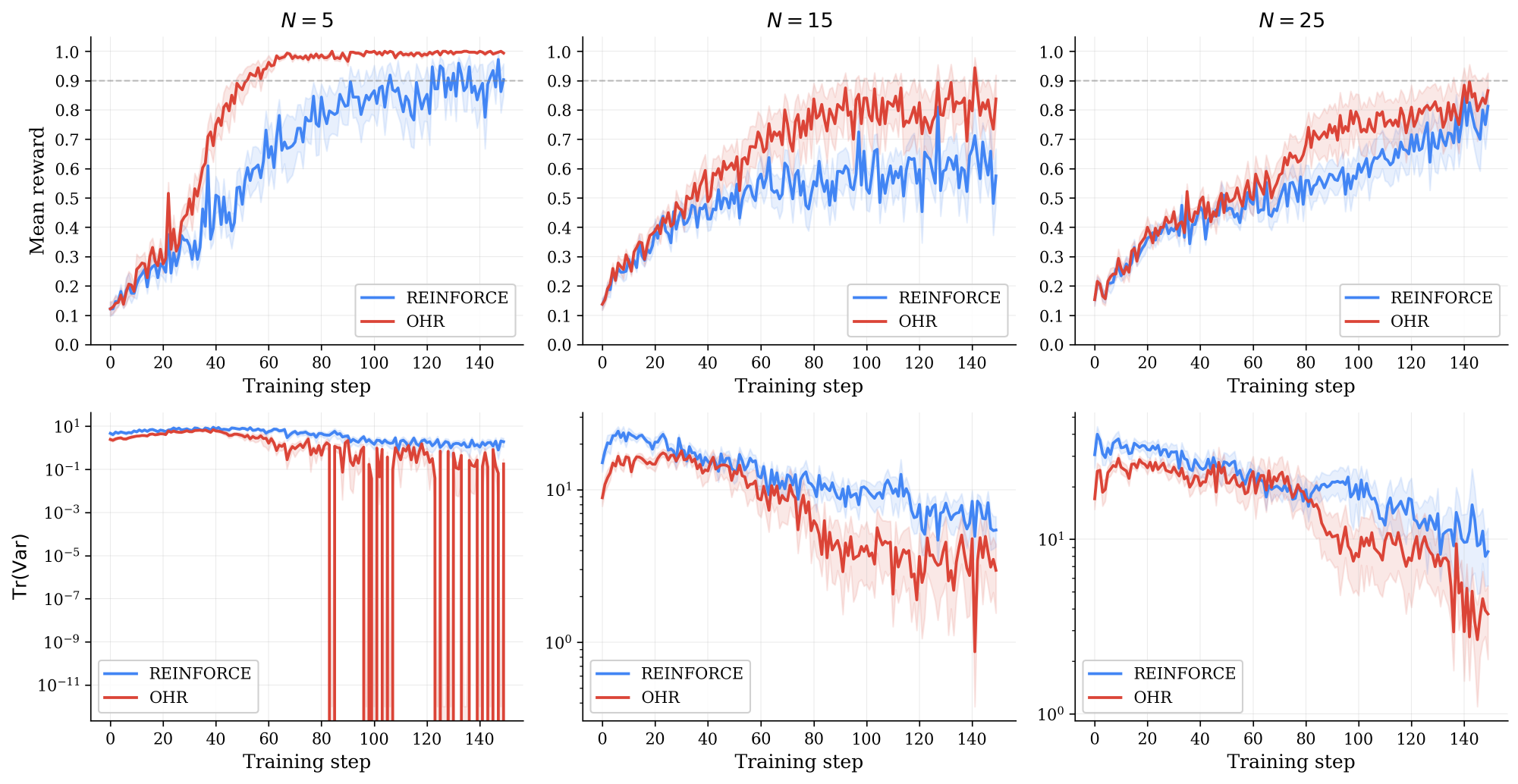}
    \caption{Continuous Embedding training: PG (blue) vs.\ OHR (red), 10 seeds per $N$, $\tau{=}3.0$. Top: reward curves with mean $\pm$ s.e. Bottom: estimator variance during training (log scale). OHR consistently achieves better training dynamics and lower variance.}
    \label{fig:training_embedding}
\end{figure}

\subsection{T-Maze: Oracle Variance Reduction}
\label{app:supp_tmaze}

\Cref{fig:tmaze_vr} shows oracle variance reduction for the T-maze measured on frozen pretrained policies ($M{=}1{,}024$ per-sample gradients, 10 seeds). PG variance grows steeply with corridor length while OHR variance grows much slower, yielding a ratio that increases monotonically with~$N$. The optimal coefficient $\alpha^* \approx -2$ and cosine similarity $\cos(g_{\mathrm{PG}}, C) \approx 0.3$ are stable across corridor lengths.

\begin{figure}[t]
    \centering
    \includegraphics[width=\textwidth]{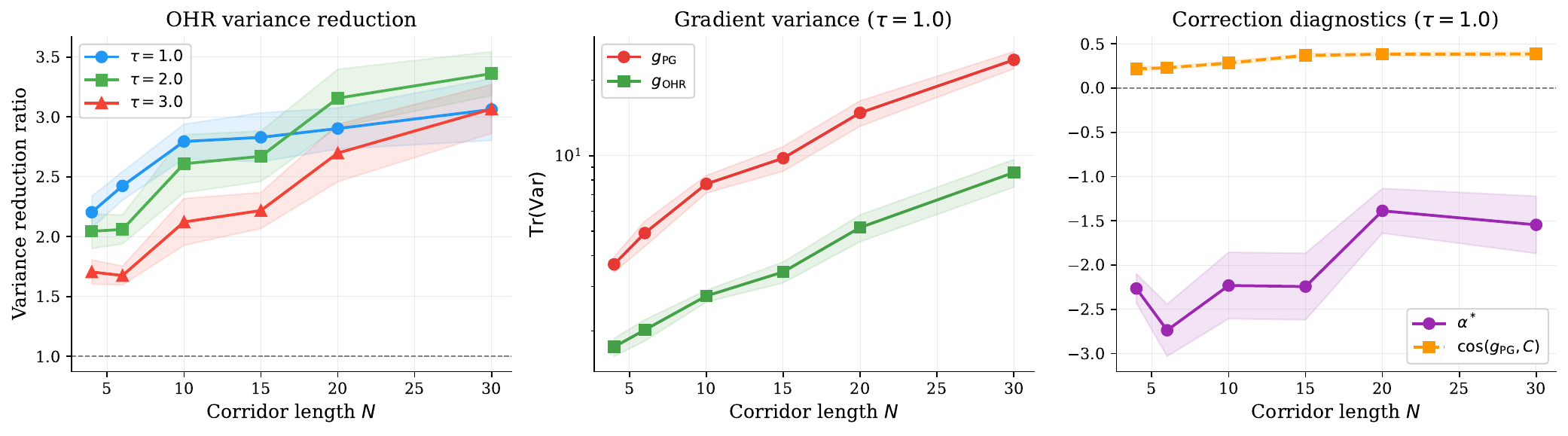}
    \caption{T-maze with GRU policy and embedding-based action feedback (10 seeds). Left: variance reduction ratio grows monotonically with corridor length $N$ across all $\tau$. Center: PG variance (blue) grows steeply with $N$ while OHR variance (green) grows much slower (log scale). Right: $\alpha^* \approx -2$ and $\cos(g_{\mathrm{PG}}, C) \approx 0.3$ are stable across $N$.}
    \label{fig:tmaze_vr}
\end{figure}

\subsection{TSP: Training Curves}
\label{app:supp_tsp}

\Cref{fig:tsp_learning} compares REINFORCE and OHR training across $n \in \{8, 10, 15, 20\}$ cities. Both methods converge to similar final tour lengths, but OHR achieves $1.15$--$1.25\times$ lower gradient variance throughout training.

\begin{figure}[t]
    \centering
    \includegraphics[width=\textwidth]{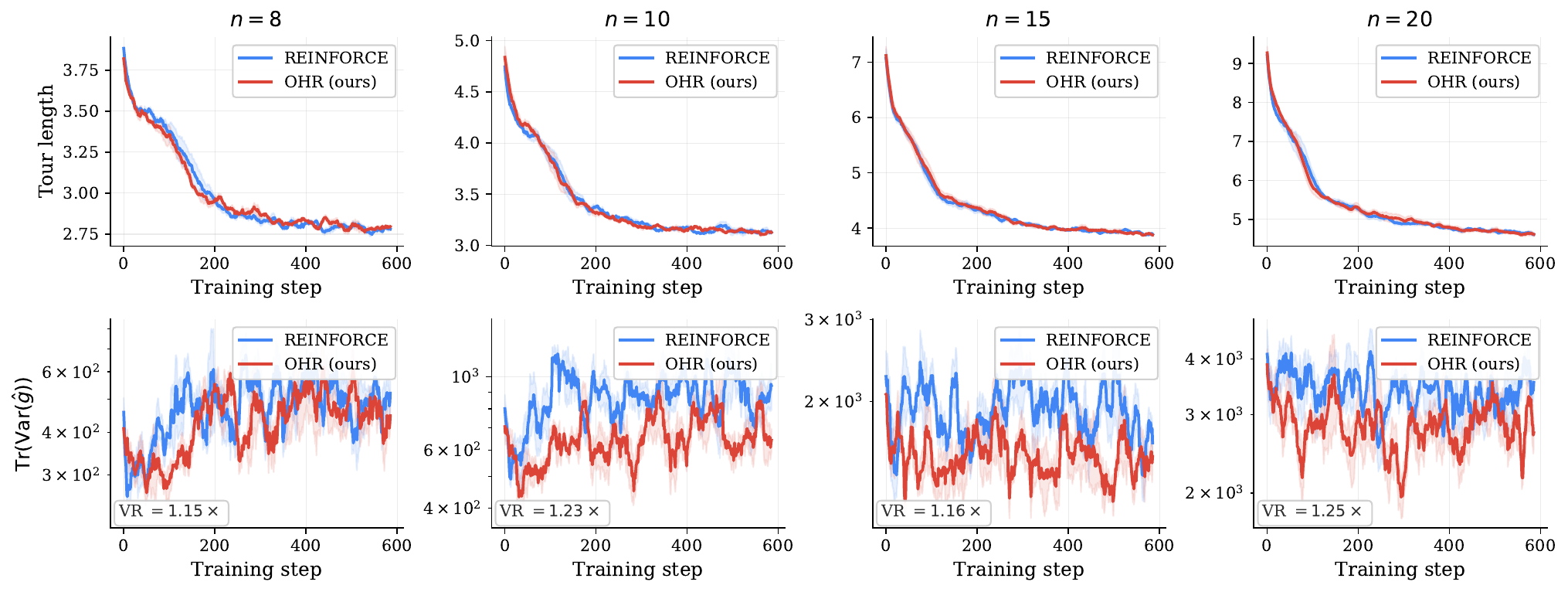}
    \caption{TSP training: REINFORCE (blue) vs.\ OHR (red), $M{=}16$, $\tau{=}1.0$, 4 seeds. Top: tour length converges to similar final values. Bottom: OHR gradient variance (red) is consistently lower than REINFORCE (blue), with VR annotations showing $1.15$--$1.25\times$ reduction.}
    \label{fig:tsp_learning}
\end{figure}

\subsection{Composability with Reward-Side Methods}
\label{app:supp_composability}

HR operates on the score function $\nabla_\theta \log p_\theta$; existing variance reduction techniques---greedy rollout baselines~\citep{kool2019attention}, POMO~\citep{kwon2020pomo}, leave-one-out (RLOO) baselines~\citep{kool2019buy}, and learned critics---operate on the advantage $(r - b)$. Since these two sources of variance reduction are structurally orthogonal, they should compose. We test this prediction on both the T-maze and TSP.

\paragraph{T-maze: HR + learned critic.}
We add a small critic network (MLP on the GRU hidden state at the junction, trained with MSE loss) and compare four conditions: PG and OHR, each with batch-mean or learned-critic baseline (\Cref{fig:tmaze_compose}). At $N{=}10$, OHR + mean baseline solves the task first (reaching reward $1.0$ around step 150), outpacing both REINFORCE variants (${\sim}$step 200). With the learned critic, OHR reduces variance at all tested corridor lengths, reaching $1.70\times$ at $N{=}15$ (\Cref{tab:composability}). With the batch-mean baseline, OHR improves variance at $N{=}6$ and $N{=}15$ but not at $N{=}10$ ($0.86\times$), indicating that the interaction between HR and a given baseline can be problem-dependent.

\paragraph{TSP: HR + RLOO baseline.}
We combine OHR with the leave-one-out baseline of~\citet{kool2019buy}, sampling $k{=}8$ tours per instance (\Cref{fig:tsp_compose}). The variance reductions compose near-multiplicatively: at $n{=}10$, RLOO alone gives $2.45\times$, OHR alone gives $1.27\times$, and OHR + RLOO gives $3.55\times$ (vs.\ a product prediction of $3.11\times$). The combined estimator achieves the lowest gradient variance across all conditions at every problem size tested.

\begin{figure}[t]
    \centering
    \includegraphics[width=\textwidth]{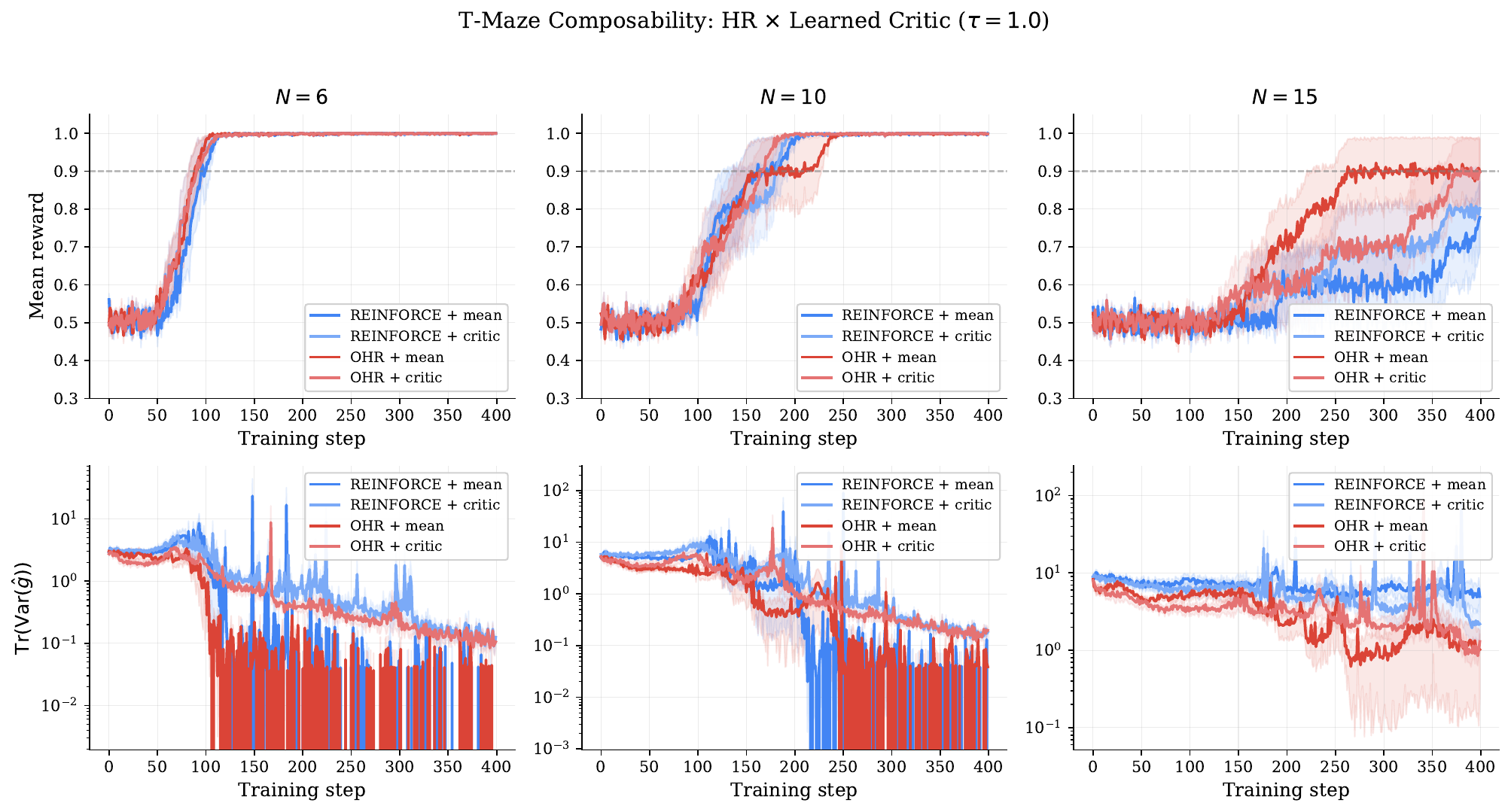}
    \caption{T-maze composability ($\tau{=}1.0$, 10 seeds). At $N{=}10$, OHR (red) solves the task faster than REINFORCE (blue) regardless of baseline. Solid: mean baseline; dashed: learned critic.}
    \label{fig:tmaze_compose}
\end{figure}

\begin{figure}[t]
    \centering
    \includegraphics[width=\textwidth]{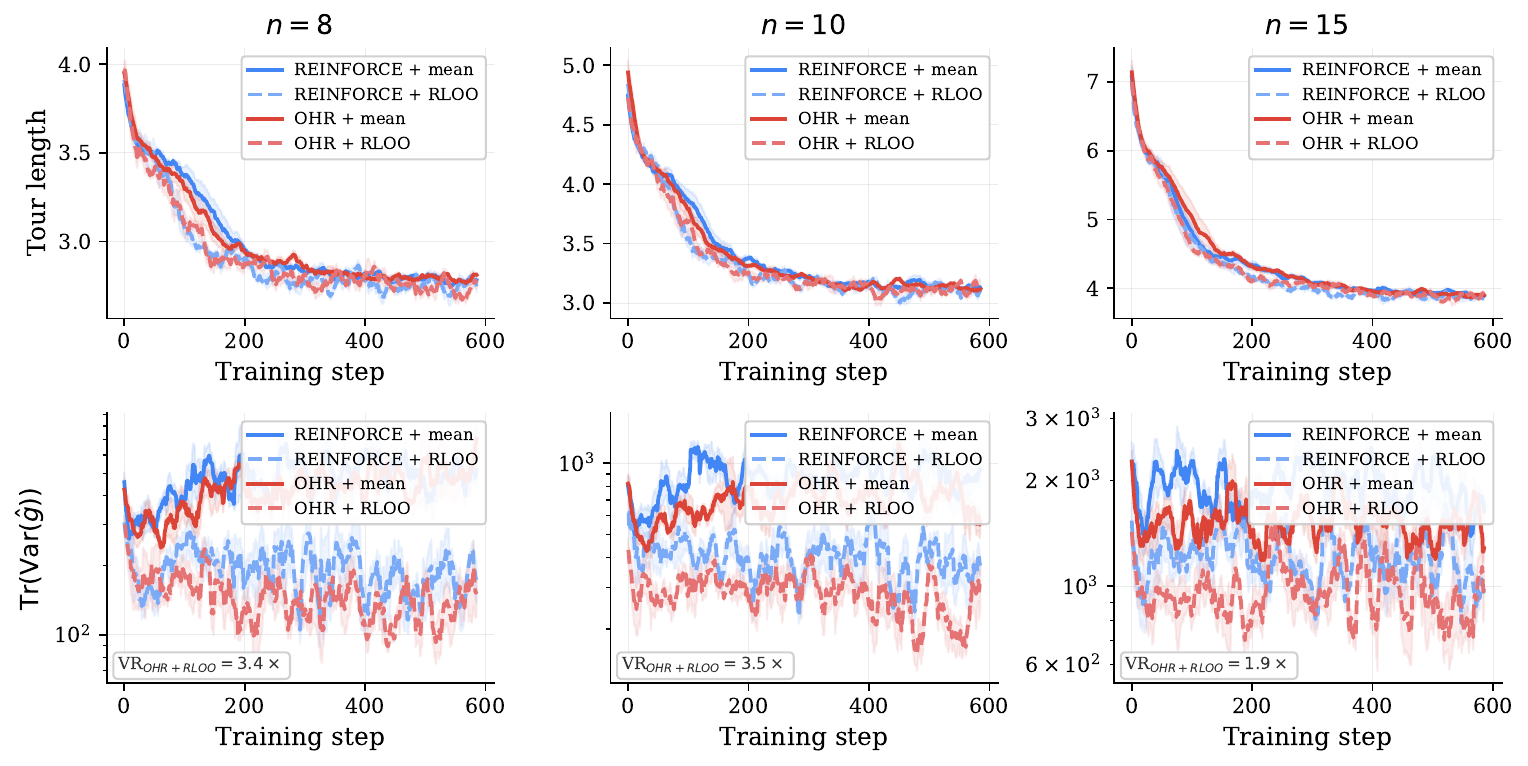}
    \caption{TSP composability ($\tau{=}1.0$, $M{=}16$, 4 seeds). OHR + RLOO (dashed red) achieves the lowest variance at all problem sizes, with combined VR of $3.4$--$3.5\times$ at $n{=}8$--$10$.}
    \label{fig:tsp_compose}
\end{figure}

\begin{table}[t]
    \centering
    \caption{Composability of HR with reward-side methods. Left: T-maze VR with and without learned critic. Right: TSP VR decomposition showing near-multiplicative composition of OHR and RLOO.}
    \label{tab:composability}
    \small
    \begin{minipage}[t]{0.45\textwidth}
    \centering
    \textbf{T-maze: PG/OHR variance ratio}\\[4pt]
    \begin{tabular}{lcc}
        \toprule
        $N$ & Mean baseline & Learned critic \\
        \midrule
        6  & $1.13\times$ & $1.54\times$ \\
        10 & $0.86\times$ & $1.08\times$ \\
        15 & $4.36\times$ & $1.70\times$ \\
        \bottomrule
    \end{tabular}
    \end{minipage}
    \hfill
    \begin{minipage}[t]{0.50\textwidth}
    \centering
    \textbf{TSP: VR vs.\ PG + mean baseline}\\[4pt]
    \begin{tabular}{lccc}
        \toprule
        $n$ & RLOO & OHR & OHR+RLOO \\
        \midrule
        8  & $3.08\times$ & $0.94\times$ & $3.44\times$ \\
        10 & $2.45\times$ & $1.27\times$ & $3.55\times$ \\
        15 & $1.73\times$ & $1.12\times$ & $1.89\times$ \\
        \bottomrule
    \end{tabular}
    \end{minipage}
\end{table}

\section{Experiment Details}
\label{app:experiment_details}

\subsection{Experiment Setup}
\subsubsection{Gaussian Linear Markov Model (\Cref{sec:theory})}

\begin{itemize}
    \item Dimension: $d = 5$.
    \item Weight matrix (N-sweep): $W = 0.95 \cdot Q$ where $Q$ is a random orthogonal matrix ($\rho(W) = 0.95$).
    \item Horizons: $N \in \{1, 2, \ldots, 20\}$.
    \item Reward: linear $r = a^\top x_N$ with $a = (1, -1, 1, -1, 1)$, and $L^2$ reward $r = -\frac{1}{2}\|x_N - y\|^2$.
    \item Initial state: $x_{-1} = \mathbf{1}/\sqrt{d}$.
    \item Number of Monte Carlo samples: $M = 50{,}000$.
    \item Baseline: $b = \bar{r}_{\mathcal{B}}$ (batch mean).
    \item Exact gradient computed from closed-form formulas in \Cref{app:gaussian_formulas}.
\end{itemize}

\subsubsection{Embedding-Based HR (\Cref{sec:exp_embedding})}

\begin{itemize}
    \item Vocabulary: $V = 8$, embedding dimension $d_e = 16$.
    \item Architecture: embedding layer + linear output head (no hidden layers).
    \item Reward: $r = 1$ if generated sequence matches fixed target, $r = 0$ otherwise.
    \item Temperature: $\tau \in \{1.0, 2.0, 3.0\}$.
    \item Horizons: $N \in \{3, 5, 7, 9, 11, 13, 15, 18, 20\}$.
    \item Per-sample gradients: $M = 2{,}000$, averaged over 5 seeds.
    \item Baseline: $b = \bar{r}_{\mathcal{B}}$ (batch mean).
\end{itemize}

\subsubsection{Continuous Embedding-Based HR Training Dynamics (\Cref{sec:exp_embedding})}
\begin{itemize}
    \item Vocabulary: $V = 32$, embedding dimension $d_e = 8$.
    \item Architecture: embedding layer + linear output head (no hidden layers).
    \item Reward: $r = 1$ if the final sampled token matches the fixed target token $x^* = V/2$, $r = 0$ otherwise.
    \item Initial token: each trajectory starts from a token sampled uniformly from $\{0, \ldots, V-1\}$, re-sampled per training step to prevent path memorization.
    \item Temperature: $\tau = 3.0$.
    \item Horizons: $N \in \{5, 15, 25\}$.
    \item Pretraining: supervised on a structured transition kernel for $200$ steps, $\text{lr}=5\!\times\!10^{-3}$, batch size $128$.
    \item Optimizer: Adam, $\text{lr}=10^{-2}$, batch size $M = 32$, $150$ training steps.
    \item OHR centering: EMA across training steps with decay $\rho = 0.99$ (in-batch centering would collapse OHR to PG).
    \item Coefficient clipping: $\hat{\alpha}_{1,k}$ clipped to $[-2, 2]$; clipping is used only in this training experiment.
    \item Seeds: $10$ for mean $\pm$ s.e.\ bands.
    \item Baseline: $b = \bar{r}_{\mathcal{B}}$ (batch mean).
\end{itemize}

\subsubsection{T-Maze (\Cref{sec:exp_tmaze})}

\begin{itemize}
    \item Policy: GRU with hidden size $h = 16$, 2-way categorical output.
    \item Action embedding: $E_a \in \R^{2 \times h}$, learned during pretraining.
    \item Corridor lengths: $N \in \{4, 6, 10, 15, 20, 30\}$.
    \item Temperature: $\tau \in \{1.0, 2.0, 3.0\}$.
    \item Per-sample gradients: $M = 1{,}024$, averaged over 10 seeds.
    \item Pretraining: supervised cross-entropy at the junction, checkpoint selected at ${\sim}0.78$ reward.
    \item Baseline: $b = \bar{r}_{\mathcal{B}}$ (batch mean).
    \item Oracle measurement: variance computed on frozen pretrained policies (no gradient updates during measurement).
\end{itemize}

\subsubsection{T-Maze Training (\Cref{sec:exp_tmaze}, Learning Curves)}

\begin{itemize}
    \item Optimizer: SGD with constant learning rate.
    \item Learning rate: tuned per estimator via grid search over $\{10^{-4}, 10^{-3}, 10^{-2}, 10^{-1}\}$.
    \item Corridor lengths: $N \in \{10, 15, 20\}$.
    \item Temperature: $\tau = 1.0$.
    \item Batch size: $M = 128$.
    \item Training steps: 500.
    \item Seeds: 10 per corridor length per estimator.
    \item Initialization: pretrained checkpoint at ${\sim}0.78$ reward (same as oracle experiments).
\end{itemize}

\subsubsection{TSP Training (\Cref{sec:exp_tsp})}

\begin{itemize}
    \item Optimizer: Adam with learning rate $10^{-4}$.
    \item Problem sizes: $n \in \{8, 10, 15, 20\}$.
    \item Temperature: $\tau = 1.0$.
    \item Batch size: $M = 16$ (one tour per instance).
    \item Training steps: 600.
    \item Seeds: 4 per problem size per estimator.
    \item Architecture: embedding dimension $d = 64$, $L = 2$ encoder layers, 4 attention heads, single-head pointer decoder (${\sim}120$K parameters).
\end{itemize}

\subsubsection{Composability (\Cref{sec:exp_composability})}

\begin{itemize}
    \item T-maze critic: 2-layer MLP on GRU hidden state at junction, trained with MSE loss, learning rate $10^{-3}$.
    \item TSP RLOO: $k = 8$ tours per instance, leave-one-out baseline per~\citet{kool2019buy}.
    \item All other hyperparameters match the corresponding base experiments above.
\end{itemize}

\end{document}